\newtheorem{definition}{Definition}
\newtheorem{proposition}{Proposition}
\begin{document}

\title{\textit{Fuzziness-tuned:} Improving the Transferability of Adversarial Examples}

	\author{\IEEEauthorblockN{Xiangyuan Yang\IEEEauthorrefmark{2},  Jie Lin\IEEEauthorrefmark{2}\IEEEauthorrefmark{1}, Hanlin Zhang\IEEEauthorrefmark{3}, Xinyu Yang\IEEEauthorrefmark{2}, and Peng Zhao\IEEEauthorrefmark{2}} \\
	\IEEEauthorblockA{\IEEEauthorrefmark{2} Xi'an Jiaotong University, Xi'an, China,\\
		Emails: ouyang\_xy@stu.xjtu.edu.cn, \{jielin, yxyphd,p.zhao\}@mail.xjtu.edu.cn} \\
	\IEEEauthorblockA{\IEEEauthorrefmark{3} Qingdao University, Qingdao, China, Email:
		hanlin@qdu.edu.cn}
	\thanks{Corresponding author: Jie Lin (jielin@mail.xjtu.edu.cn).}
	\thanks{Manuscript received XXX, XX, 2022; revised XXX, XX, 2022.}}

\markboth{Journal of \LaTeX\ Class Files, ~Vol.~XX, No.~XX, XXX~2022}%
{Yang \MakeLowercase{\textit{et al.}}: \textit{Fuzziness-Tuned}:  Improving the Transferability of Adversarial Examples}



\maketitle

\begin{abstract}
With the development of adversarial attacks, adversairal examples have been widely used to enhance the robustness of the training models on deep neural networks. Although considerable efforts of adversarial attacks on improving the transferability of adversarial examples have been developed, the attack success rate of the transfer-based attacks on the surrogate model is much higher than that on victim model under the low attack strength (e.g., the attack strength $\epsilon=8/255$). In this paper, we first systematically investigated this issue and found that the enormous difference of attack success rates between the surrogate model and victim model is caused by the existence of a special area (known as fuzzy domain in our paper), in which the adversarial examples in the area are classified wrongly by the surrogate model while correctly by the victim model. 
Then, to eliminate such enormous difference of attack success rates for improving the transferability of generated adversarial examples, a fuzziness-tuned method consisting of confidence scaling mechanism and temperature scaling mechanism is proposed to ensure the generated adversarial examples can effectively skip out of the fuzzy domain. The confidence scaling mechanism and the temperature scaling mechanism can collaboratively tune the fuzziness of the generated adversarial examples through adjusting the gradient descent weight of fuzziness and stabilizing the update direction, respectively. Specifically, the proposed fuzziness-tuned method can be effectively integrated with existing adversarial attacks to further improve the transferability of adverarial examples without changing the time complexity. Extensive experiments demonstrated that fuzziness-tuned method can effectively enhance the transferability of adversarial examples in the latest transfer-based attacks, e.g., SINI/VMI-FGSM, FIA and SGM, by up to 12.69\% on CIFAR10, 11.9\% on CIFAR100 and 4.14\% on ImageNet for attacking five naturally trained victim models and up to 6.11\% on ImageNet for attacking eight advanced defense methods, respectively.
\end{abstract}

\begin{IEEEkeywords}
Fuzzy domain, adversarial examples, transferability, confidence scale, temperature scale.
\end{IEEEkeywords}

\section{Introduction}
\label{sec:intro}
\IEEEPARstart{D}{ue} to the linear property of deep neural networks (DNNs)~\cite{Linear-property}, adversarial examples that are generated by adding the imperceptible perturbation into natural examples can be effectively used to disrupt decision making of implemented models in deep neural networks~\cite{Adversarial-examples}. Hence, adversarial examples have been widely developed to explore the vulnerability of implemented models in deep neural networks, aiming to enhance the robustness of these implemented models.

Usually, the architecture and parameters of the victim model that the adversary aims to disrupt cannot be obtained accurately, and thus adversarial examples need to be generated by a surrogate model and then applied to the victim model, which is known as transfer-based black-box attacks. Obviously, the transferability of adversarial examples from the surrogate model to the victim model will tremendously affect the attack success rates of transfer-based black-box attacks~\cite{MI-FGSM}, i.e., the higher the transferability of adversarial examples, the greater the attack success rates of transfer-based black-box attacks.

Recently, considerable efforts on transfer-based black-box attacks have been developed to generate effective adversarial examples only through the surrogate model which does not need any information with respect to the architecture,  parameters and output of the victim model~\cite{FGSM,I-FGSM,MI-FGSM,DI-FGSM,SI-NI-FGSM,VMI-FGSM,TI-FGSM,FIA,NAA,SGM}.  For instance, the fast gradient sign method (FGSM)~\cite{FGSM} and its iterative version (i.e., I-FGSM)~\cite{I-FGSM} were the firstly proposed transfer-based attacks to generate the adversarial examples by the surrogate model when the adversary cannot obtain any information about the victim model. To avoid the adversarial example generated by the surrogate model getting stuck into the bad local optimum, i.e., leading to poor transferability of generated adversarial examples, Dong {\em et al.}~\cite{MI-FGSM} stabilized the update direction by adding the momentum term into the gradient. In addition, through introducing variance tuning, the update direction was further stabilized~\cite{VMI-FGSM}. Lin {\em et al.}~\cite{SI-NI-FGSM} used Nesterov accelerated gradient to improve the convergence significantly. Wang {\em et al.}~\cite{FIA} proposed a feature importance-aware attack (FIA) to corrupt the middle layer features with aggregate gradient, which can eliminate the model-specific information to avoid overfitting. Wu {\em et al.}~\cite{SGM} found that the adversarial examples generated by using more gradient from skip connections can achieve higher transferability than that by the residual modules in ResNet-like model.

\begin{figure}[t]
\begin{center}
\centerline{\includegraphics[width=\columnwidth]{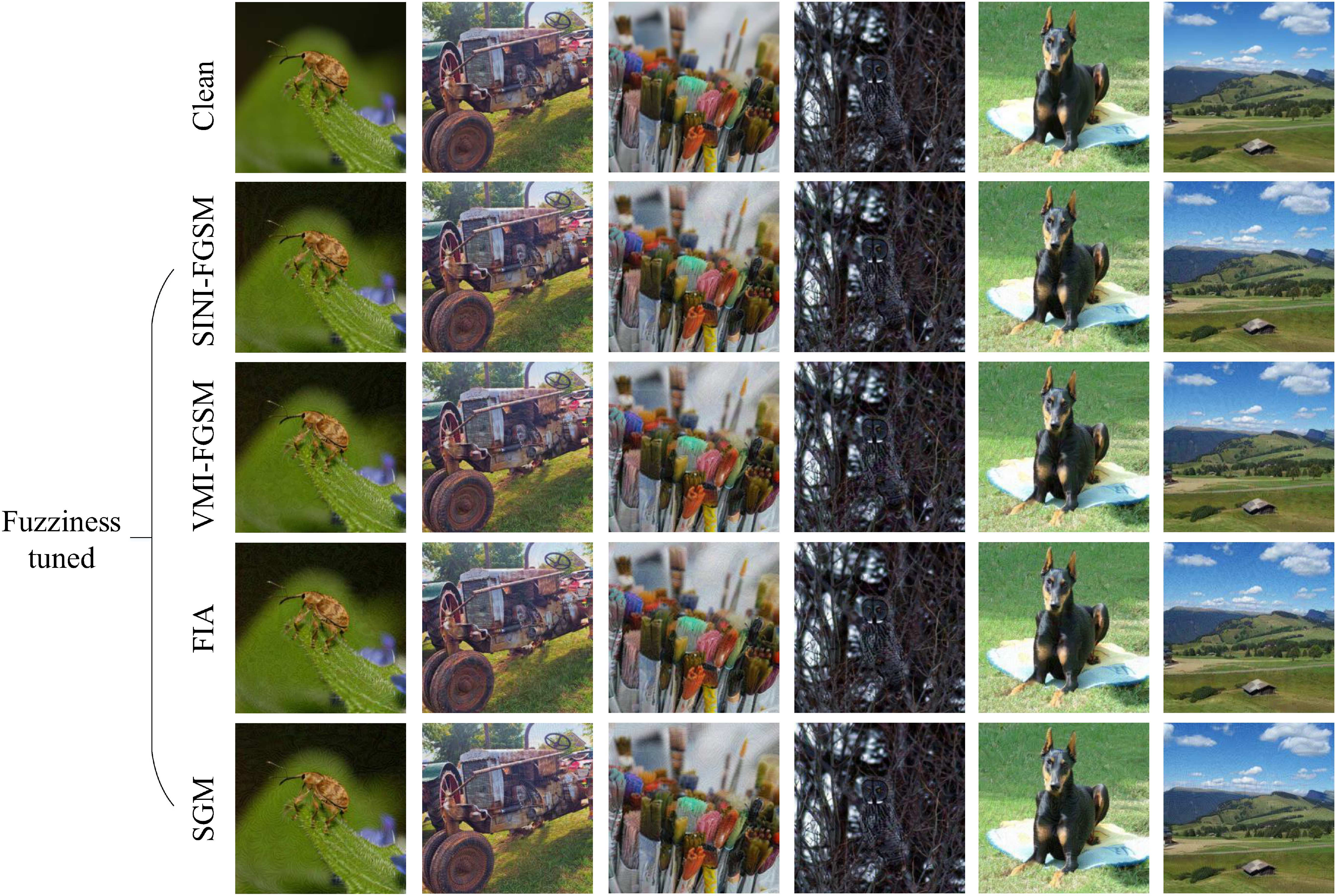}}
\caption{Six clean examples and their adversarial versions generated by fuzziness-tuned attacks.}\vspace{-5mm}
\label{FIG:adversarial_examples}
\end{center}
\end{figure}

However, when these existing transfer-based attacks achieve low attack strength, i.e., $\epsilon=8/255$, the attack success rate (ASR) of generated adversarial examples on the victim model will be much lower than that on the surrogate model, which means the generated adversarial examples achieve poor transferability. For example, based on our results in Section~\ref{sec:experiments}, when the naturally trained models are served as the victim models, the minimum difference of the attack success rates on the surrogate model and the victim model can be achieved as 32\% on ImageNet~\cite{ImageNet} dataset, i.e., the attack success rate on the victim model can be 32\%  lower than that on the surrogate model.


To address this issue, in this paper, we first systematically investigated the reason why such an enormous difference in attack success rates existed between the surrogate model and victim model in transfer-based attacks with low attack strength. Then, a fuzziness-tuned method consisting of the confidence scaling mechanism and temperature scaling mechanism is proposed to eliminate the enormous difference in attack success rates between the surrogate model and victim model in the transfer-based attacks with low attack strength, thereby improving the transferability of generated adversarial examples on the victim model on basis of guaranteeing the great attack success rate on the surrogate model. Additionally, Fig. ~\ref{FIG:adversarial_examples} shows six clean examples and the corresponding adversarial examples generated by our proposed fuzziness-tuned method where we can obviously observe that the adversarial examples generated by our fuzziness-tuned method are imperceptible compared to clean examples.


Our main contributions can be summarized as follows:
\begin{itemize}
\item First, we found that the enormous difference of attack success rates between the surrogate model and victim model in transfer-based attacks with low attack strength is caused by the existence of a special area during adversarial examples generation, in which adversarial examples generated falling in this special area will be classified wrongly on the surrogate model but correctly on the victim model, thereby reducing the transferability of generated adversarial examples from the surrogate model to the victim model. In our paper, the found special area is mathematically defined as \textit{fuzzy domain} and the concept of \textit{fuzziness} is proposed as a metric to indicate whether an adversarial example generated falls in the fuzzy domain or not. 

\item Second, a fuzziness-tuned method that consists of the confidence scaling mechanism and the temperature scaling mechanism is proposed to effectively tune the fuzziness of the generated adversarial examples through adjusting the gradient descent weight of fuzziness and stabilizing the update detection, which can ensure the generated adversarial examples can skip out of the fuzzy domain, thereby enhancing the transferability of generated adversarial examples from the surrogate model to the victim model. Additionally, according to our theoretical analysis, the default cross-entropy (CE) loss used to generate adversarial examples can be a special case of our fuzziness-tuned method with parameter $\mathcal{K}=1$ in the confidence scaling mechanism and parameter $\mathcal{T}=1$ in the temperature scaling mechanism. That means, our proposed fuzziness-tuned method can effectively assist default CE loss in generating adversarial examples with skipping out of fuzzy domain by adjusting the parameters $\mathcal{K}$ and $\mathcal{T}$ as well.


\item Lastly, to evaluate the effectiveness of our proposed fuzziness-tuned method on improving the transferability of generated adversarial examples in the transfer-based attacks with low attack strength, extensive experiments have been conducted with respects to the attack success rates of several latest transfer-based attacks (e.g., SINI/VMI-FGSM, FIA and SGM). The results show that, our proposed fuzziness-tuned method can enhance the transferability of generated adversarial examples by up to 12.69\% on CIFAR10, 11.9\% on CIFAR100 and 4.14\% on ImageNet when attacking against five naturally trained victim models and up to 6.11\% on ImageNet when attacking againt eight advanced defense methods, respectively. 
\end{itemize}

The remainder of the paper is organized as follows: We define the notation and present the basic ideas of several latest transfer-based attacks in Section~\ref{sec:preliminary}. We present  fuzziness-tuned method in Section~\ref{sec:methodology}.  We show experimental results to validate our findings in Section~\ref{sec:experiments}, we review the related works in Section~\ref{sec:related_work} and conclude this paper in Section~\ref{sec:conclusion}, respectively.

\begin{table}
	\centering \caption{Notation}\label{Notation}
	\begin{tabular}{lp{0.31\textwidth}} \hline
		$x:$& The natural example.\\
		$y_o:$& The corresponding ground truth label of natural example $x$.\\
		$f:$& The surrogate model.\\ 
		$L(x,y_o;\theta):$&  The loss function (e.g., the cross-entropy loss) of the surrogate model $f$ with parameters $\theta$.\\
		$x^{adv}:$& The corresponding adversarial example of natural example $x$.\\
		$\epsilon:$& The magnitude of the adversarial perturbations, i.e. the attack strength.\\
		$C:$& The number of categories in the classification task.\\
		$\boldsymbol{z}=[z_1,z_2,\cdots,z_C]:$& The logit vector of the surrogate model $f$ output.\\
		$z_o:$& The logit of the surrogate model $f$ with respect to the ground truth label $y_o$. \\
		\hline
	\end{tabular}
\end{table}

\section{Preliminary}
\label{sec:preliminary}
In this section, the notations used in this paper are defined firstly, and then the basic ideas of several latest transfer-based attacks that our fuzziness-tuned method will be applied with are mentioned.  

\subsection{Notations}

All notations are defined in Table~\ref{Notation}. Note that, in these transfer-based attacks, the adversarial example $x^{adv}$ is generated by maximizing the loss function (e.g., $L(x^{adv}, y_o; \theta)$) in the $l_p$ norm bounded perturbations. Additionally, to be consistent with the existing works~\cite{FGSM,I-FGSM,MI-FGSM,DI-FGSM,SI-NI-FGSM,VMI-FGSM,FIA,SGM}, the parameter $p$  in our paper is set as $\infty$. That is $l_\infty$ norm bounded perturbations are used in our paper to measure the distortion between natural example $x$ and corresponding adversarial example $x^{adv}$, which can be constrained as

\begin{align}
\left\| x-x^{adv} \right\| _\infty \leq \epsilon \quad (i.e.,\quad x^{adv}\in \mathcal{B}(x,\epsilon)), 
\end{align}
where $\epsilon$ is the magnitude of the adversarial perturbations and $\mathcal{B}(x,\epsilon)$ is the spherical neighborhood with $x$ as the center and $\epsilon$ as the radius, respectively.



\subsection{Basic Ideas of Several Latest Transfer-based Attacks}\label{Family_of_FGSM}
The basic ideas of several latest transfer-based attacks including \textbf{FGSM}~\cite{FGSM},  \textbf{I-FGSM}~\cite{I-FGSM}, \textbf{MI-FGSM}~\cite{MI-FGSM}, \textbf{NI-FGSM}~\cite{SI-NI-FGSM}, \textbf{VMI-FGSM}~\cite{VMI-FGSM}, as well as \textbf{FIA}~\cite{FIA}, \textbf{SGM}~\cite{SGM} and \textbf{RCE} loss~\cite{RCE}, are  briefly mentioned as below.

\textbf{FGSM}~\cite{FGSM} was the firstly proposed gradient-based attack, which can generate adversarial examples by maximizing the loss $L(x^{adv},y_o;\theta)$ with one step update on natural examples:
\begin{align}
x^{adv}=x+\epsilon\cdot sign(\nabla_x L(x,y_o;\theta))
\label{equation:fgsm}
\end{align}
where $sign(\cdot)$ is the sign function and $\nabla_x L(x,y_o;\theta)$ is the gradient of the loss function with respect to natural example $x$.

\textbf{Iterative FGSM (I-FGSM)}~\cite{I-FGSM} is an iterative version of FGSM, which can generate adversarial examples by maximizing the loss $L(x^{adv},y_o;\theta)$ with multiple small step updates on natural examples: 
\begin{align}
x^{adv}_{t+1}=Clip^\epsilon_x\{x^{adv}_t+\alpha\cdot sign(\nabla_{x^{adv}_t}L(x^{adv}_t,y_o;\theta))\}
\label{equation:ifgsm}
\end{align}
where $x^{adv}_t$ is the generated adversarial example at the $t$-th step update and $x^{adv}_0$ is the natural example $x$ (i.e., $x^{adv}_0=x$), $\alpha$ represents the step update length, and $Clip^\epsilon_x(\cdot)$ function is used to ensure the generated adversarial examples fall into the $\epsilon$-ball of the natural example $x$.

\textbf{Momentum I-FGSM (MI-FGSM)}~\cite{MI-FGSM} can generate adversarial examples by adding the momentum into the gradient in each step update to avoid the generated adversarial examples getting stuck into the local optimum:
\begin{align}
g_{t+1} &= \mu \cdot g_t + \frac{\nabla_{x^{adv}_t}L(x^{adv}_t,y_o;\theta)}{\|\nabla_{x^{adv}_t}L(x^{adv}_t,y_o;\theta)\|_1} \label{equation:mifgsm_gradient}, \\
x^{adv}_{t+1} &= Clip^{\epsilon}_x\{x^{adv}_t+\alpha\cdot sign(g_{t+1})\} \label{equation:mifgsm_update}
\end{align}
where $g_t$ is the accumulated gradient at the $t$-th iteration ($g_0=0$) and $\mu$ is the decay factor of $g_t$, respectively.

\textbf{Nesterov I-FGSM (NI-FGSM)}~\cite{SI-NI-FGSM} integrates Nesterov Accelerated Gradient (NAG) into I-FGSM to leverage looking ahead property of NAG and prompts generated adversarial examples to escape from poor local maxima easier and faster, thereby improving the transferability of generated adversarial examples. In comparison with MI-FGSM, NI-FGSM adds the looking ahead operation before Equation~(\ref{equation:mifgsm_gradient}):
\begin{align}
x^{nes}_t &= x^{adv}_t+\alpha\cdot \mu \cdot g_t
\label{equation:looking-ahead},\\
g_{t+1} &= \mu \cdot g_t + \frac{\nabla_{x^{adv}_t}L(x^{nes}_t,y_o;\theta)}{\|\nabla_{x^{adv}_t}L(x^{nes}_t,y_o;\theta)\|_1} \label{equation:mifgsm_gradient}, \\
x^{adv}_{t+1} &= Clip^{\epsilon}_x\{x^{adv}_t+\alpha\cdot sign(g_{t+1})\}.
\end{align}

\textbf{Variance tuning MI-FGSM (VMI-FGSM)}~\cite{VMI-FGSM} further stabilizes each step update direction of MI-FGSM through variance tuning, in which  the variance $v_t$ is added to calculate the gradient $g_{t+1}$:
\begin{align}
g_{t+1} &= \mu \cdot g_t + \frac{\nabla_{x^{adv}_t}L(x^{adv}_t,y;\theta)+v_t}{\|\nabla_{x^{adv}_t}L(x^{adv}_t,y;\theta)+v_t\|_1} \label{equation:vmifgsm-gradient}, \\
v_{t+1} &= \frac{\sum\nolimits_{i=1}^N{\nabla _{x^{adv}_{ti}}L(x^{adv}_{ti},y;\theta)}}{N} - \nabla_{x^{adv}_t}L(x^{adv}_t,y;\theta) \label{equation:vmifgsm-variance},\\
x^{adv}_{ti}&=x^{adv}_t+r_i, r_i\sim U[-(\beta\cdot\epsilon)^d,(\beta\cdot\epsilon)^d],
\end{align}
where $N$ is the number of examples,  $v_{t}$ is the gradient variance at the $t$-th step update ($v_0=0$), and $U[a^d,b^d]$ represents the uniform distribution with $d$ dimensions and $\beta$ is a hyperparameter, respectively.

\textbf{Feature importance-aware attack (FIA)}~\cite{FIA} focused on corrupting the middle layer feature with high importance to enhance the transferability by adding a weight to each feature, which can be represented as 
\begin{align}
\Delta^x_k &= \frac{\partial L_{logit}(x,y_o;\theta)}{\partial f_k(x;\theta)} \label{equation:fia_weight} \\
\Delta&=\frac{1}{D}\sum_{n=1}^N{\Delta _{k}^{x\odot M_{p_d}^{n}}}, M_{p_d}\sim Bernoulli(1-p_d) \label{equation:fia_avg_weight},
\end{align}
where $L_{logit}(x,y_o;\theta)$ represents the logit output with respect to the ground truth label $y_o$, $f_k(x;\theta)$ represents the feature maps from the $k$-th layer of the surrogate model $f$, $M_{p_d}$ is a binary matrix with the same size to $x$, $p_d$ is the drop probability,  $\odot$ represents the element-wise product, $D$ is the normalizer that can be obtained by $l_2$-norm on the corresponding summation term, and $N$ indicates the number of random masks applied to the input $x$. In comparison with the family of gradient-based attacks, FIA generates adversarial examples by minimizing the loss function as follows:

\begin{align}
L_{FIA}(x,y_o;\theta) = \sum(\Delta\odot f_k(x;\theta)) \label{equation:fia}.
\end{align}


\textbf{Skip gradient method (SGM)}~\cite{SGM} explored the architectural vulnerability of deep neural networks and found that, in ResNet-like neural networks, adversarial examples generated by using more gradients from the skip connections can achieve higher transferability than that generated by the residual modules.

\textbf{Relative cross-entropy (RCE) loss~\cite{RCE}} is a new normalized CE loss to ensure the generated adversarial examples escape from the poor local maxima through guiding the logit to be updated in the direction of implicitly maximizing its rank distance from the ground-truth class:
%
\begin{align}
Softmax(z_i)=&\frac{e^{z_i}}{\sum\nolimits_{c=1}^C{e^{z_c}}},\label{equation:softmax}\\
L_{CE}(x,y_o;\theta)=&-\log Softmax(z_o),\label{equation:ce}\\\nonumber
L_{RCE}(x,y_o;\theta)=&L_{CE}(x,y_o;\theta)\\
&-\frac{1}{C}\sum\nolimits_{c=1}^CL_{CE}(x,y_c;\theta)\label{equation:rce}
\end{align}
where $z_o$ is the logit of the surrogate model $f$ with respect to the ground truth label $y_o$, and $C$ is the number of categories in the classification task, respectively. 

Although these transfer-based attacks can effectively generate adversarial examples by the surrogate model to the victim model, most of these attacks generate adversarial examples by maximizing the default cross-entropy (CE) loss function and lead to the attack success rates of generated adversarial examples on the surrogate model are much higher than that on the victim model in the transfer-based attacks with low attack strength. That means the generated adversarial examples achieve low transferability from the surrogate model to the victim model. Hence, this calls for a method that can be applied to the transfer-based attacks to eliminate such enormous differences of attack success rates between the surrogate model and the victim model by improving the transferability of generated adversarial examples.


%
\begin{figure}[t]
\begin{center}
\centerline{\includegraphics[width=\columnwidth]{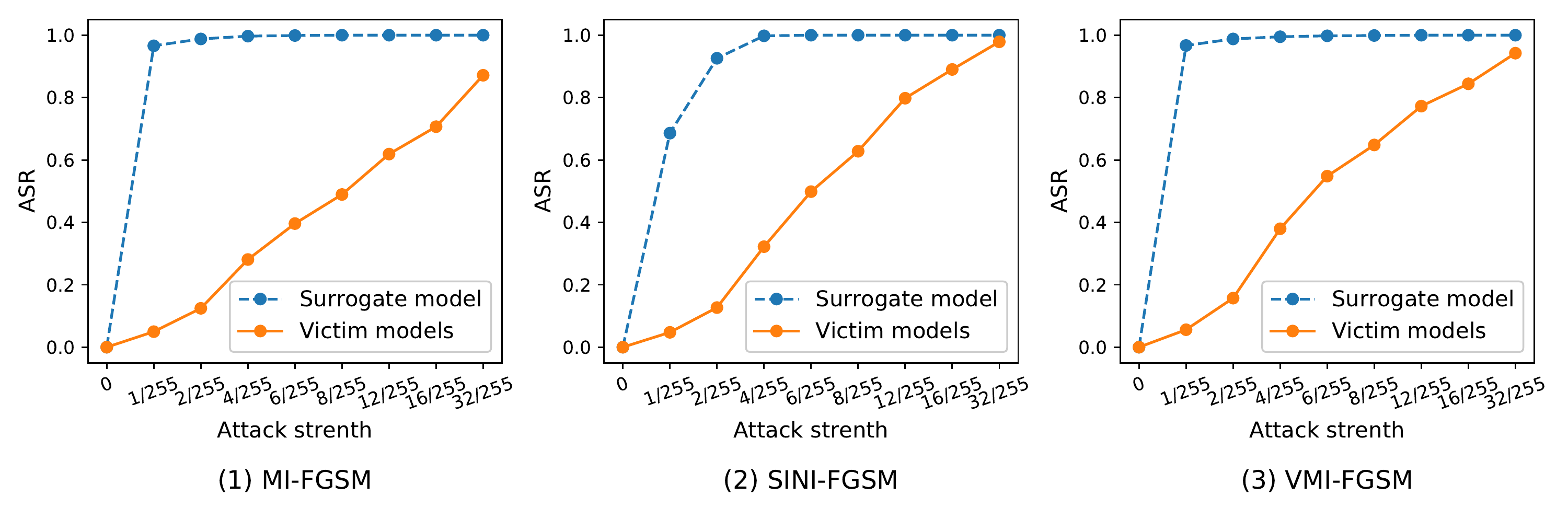}}
\caption{Attack success rate (ASR) curve on the surrogate model and five victim models on ImageNet. Note that, on five victim models, the ASR represents the average value.}\vspace{-5mm}
\label{FIG:ASR_curve_with_different_epsilon}
\end{center}
\end{figure}

\section{Methodology}
\label{sec:methodology}
In this section, we first investigate the reason why the enormous difference of attack success rates between the surrogate model and the victim model exists in transfer-based attacks with low attack strength. Then, a fuzziness-tuned method is proposed to be applied into transfer-based attacks to eliminate such enormous differences of attack success rates and improve the transferability of generated adversarial examples. Finally, the performance of the proposed fuzziness-tuned method is analyzed.


\subsection{The Discovery of Fuzzy Domain}
\label{sec:the_discovery_of_fuzzy_domain}
As shown in Fig.~\ref{FIG:ASR_curve_with_different_epsilon}, we found that, as the attack strength $\epsilon$ increases, the attack success rate (ASR) on the surrogate model can fast converge to 100\%, while the attack success rate (ASR) on five victim models converge slowly to close to 100\%. Specifically, in MI-FGSM~\cite{MI-FGSM}, when the attack strength $\epsilon$ is set as $16/255$, i.e. $\epsilon=16/255$, the ASR on the surrogate model is about $30\%$ higher than that on five victim models. Although SINI/VMI-FGSM~\cite{SI-NI-FGSM, VMI-FGSM} decrease the ASR difference between the surrogate model and five victim models to $11\%$ when the attack strength $\epsilon=16/255$, our investigation found that a large ASR difference, i.e. $35\%$, is still existed when the attack strength is reduced to $\epsilon=8/255$.

Based on the above analysis, we found that in transfer-based attacks with low attack strength, the enormous difference in attack success rates between the surrogate model and victim model appeared due to a special area is existed during adversarial examples generation. Adversarial examples generated falling into this special area will be classified wrongly on the surrogate model but correctly on the victim model, i.e., achieving greater ASR on the surrogate model but lower ASR on the victim model.

To investigate the impact of such a special area on ASRs between the surrogate model and victim model in depth, in this paper the concept of \textit{fuzzy domain} is introduced to represent the found special area and mathematically defined in Definition~\ref{definition:fuzzy_domain}. Additionally, the concept of \textit{fuzziness} is defined as well in Definition~\ref{definition:fuzziness} to determine whether the generated adversarial example falls into the fuzzy domain or not.


\begin{definition}[Fuzzy domain]
\label{definition:fuzzy_domain}
The fuzzy domain $\mathbb{F}(x)$ of an input natural example $x$ in the spherical neighborhood $\mathcal{B}(x,\epsilon)$ is composed of overfitting fuzzy domain (denoted as $\mathbb{F}^+(x)$) and underfitting fuzzy domain (denoted as $\mathbb{F}^-(x)$), which can be represented as
\begin{align}
\left\{ \begin{array}{l}
\mathbb{F} (x) = \mathbb{F} ^+(x) \cup \mathbb{F}^-(x) \\
\mathbb{F}^+(x) = \left\{ \hat{x}|\hat{x}\in \mathcal{B}(x,\epsilon)\land \hat{z}_o<z^+ \right\}\\
\mathbb{F}^-(x) = \left\{ \hat{x}|\hat{x}\in \mathcal{B}(x,\epsilon)\land \hat{z}_o>z^- \right\}\\
\end{array} \right.
\label{equation:fuzzy_domain}
\end{align}
where $\hat{z}_o$ is the logit of the surrogate model $f$ with respect to example $\hat{x}$ whose ground truth label is $y_o$, $z^+$ and $z^-$ is the thresholds for the overfitting and underfitting fuzzy domains, respectively.
\end{definition}

\begin{definition}[Fuzziness]
\label{definition:fuzziness}
The fuzziness of a generated adversarial example is defined as the logit of the surrogate model with respect to the generated adversarial example (i.e., $\hat{z}_o$ in Definition~\ref{definition:fuzzy_domain}) and is used to determine whether the generated adversarial example in the fuzzy domain or not.
\end{definition}

Both of definitions are based on the following assumptions that: the success of the adversarial attacks depends on the degree of damage to the required features of the original category. Hence, the logit of the original category can be determined as fuzziness to measure the degree of damage. In theory, the smaller the fuzziness, the greater the degree of damage. However, due to the structural differences between the surrogate model and victim model, continuously reducing the fuzziness will not increase the degree of damage or even decrease the degree of damage, thus reducing the transferability.

Therefore, according to the Definition~\ref{definition:fuzzy_domain}, the smaller the fuzziness of a generated adversarial example, the higher the probability that the generated adversarial example falls into the overfitting fuzzy domain. Meanwhile, the higher the fuzziness of a generated adversarial example, the higher the probability that the generated adversarial example falls into the underfitting fuzzy domain as well.
Hence, an appropriate fuzziness needs to be determined for generating adversarial examples to ensure the generated adversarial examples skip out of the fuzzy domain, thereby effectively eliminating the enormous difference of attack success rates of the generated adversarial example between the surrogate model and victim model for improving the transferability of the generated adversarial example.

\begin{figure}[t]
	\begin{center}
		\centerline{\includegraphics[width=\columnwidth]{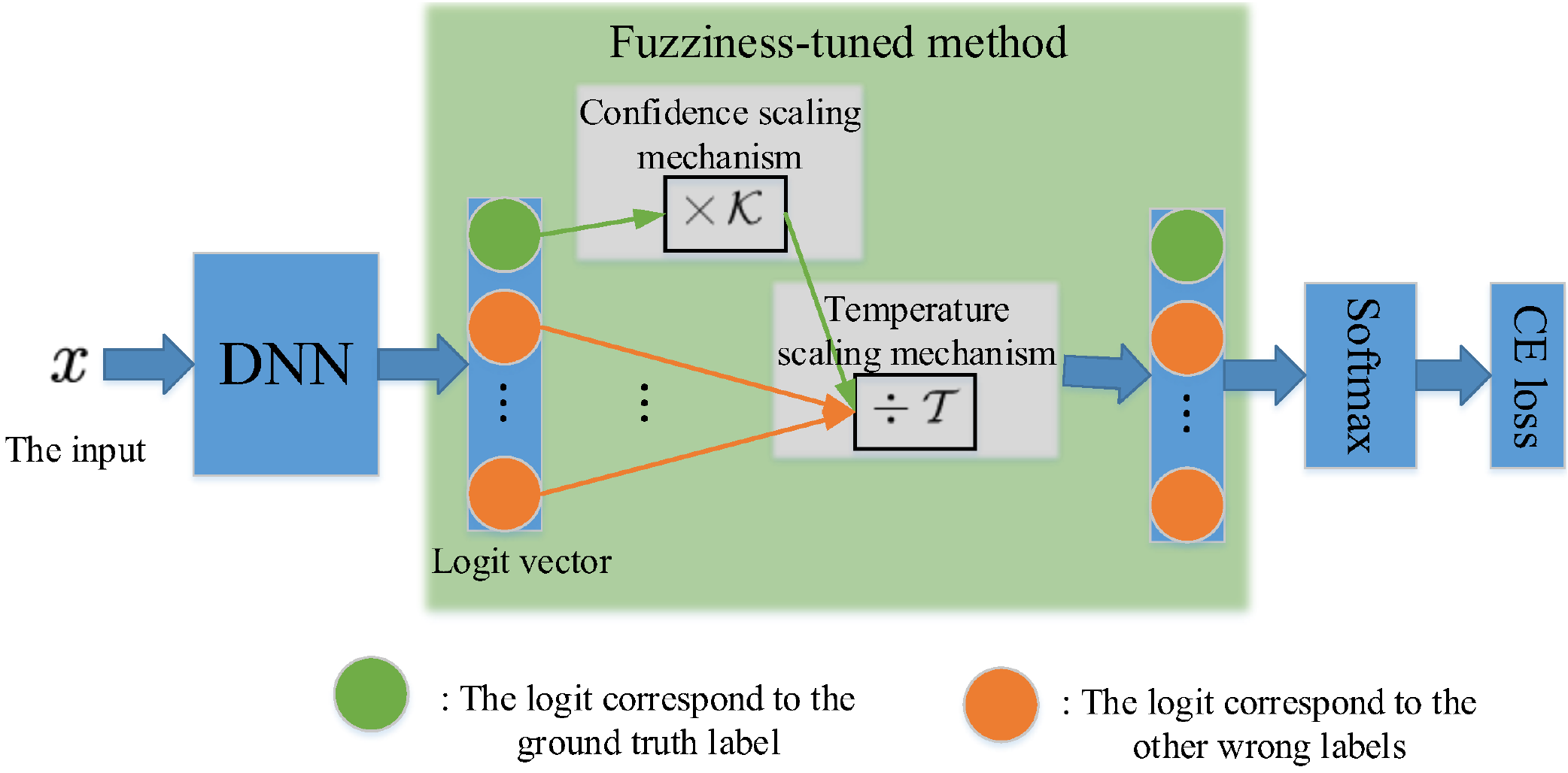}}
		\caption{The framework of fuzziness-tuned method.}\vspace{-5mm}
		\label{FIG:framework_of_FTM}
	\end{center}
\end{figure}

\subsection{Fuzziness-tuned Method}
\label{sec:FTM}

To ensure the generated adversarial examples skip out of the fuzzy domain, a fuzziness-tuned method is proposed to be applied to the latest transfer-based attacks. As shown in Fig.~\ref{FIG:framework_of_FTM}, the proposed fuzziness-tuned method consisting of the confidence scaling mechanism and temperature scaling mechanism is applied to the logit vector of the surrogate model. 

Note that,  the fuzzy domain in adversarial example generation includes the underfitting fuzzy domain and the overfitting domain, which are caused by the insufficient descent of the fuzziness and the excessive descent of the fuzziness on the surrogate model, respectively. Hence, in our paper, the confidence scaling mechanism is proposed to skip the underfitting fuzzy domain by directly increasing the gradient descent weight of fuzziness during adversarial example generation, while the overfitting fuzzy domain is skipped by stabilizing the update direction with the temperature scaling mechanism. Obviously, the confidence scaling mechanism and temperature scaling mechanism can be used collaboratively in the proposed fuzziness-tuned method.

The temperature scaling mechanism firstly was proposed~\cite{Knowledge-distillation} to improve the generalization performance of the student model on knowledge distillation. In our paper, the temperature scaling mechanism is transplanted into our fuzziness-tuned method to have the ability to avoid generated adversarial examples falling into an overfitting fuzzy domain when decreasing the fuzziness by stabilizing the update direction.

In this section, the confidence scaling mechanism and temperature scaling mechanism are described in details, and then the loss function of our fuzziness-tuned method is presented.

\begin{figure*}[ht]
	\begin{center}
		\centerline{\includegraphics[width=\textwidth]{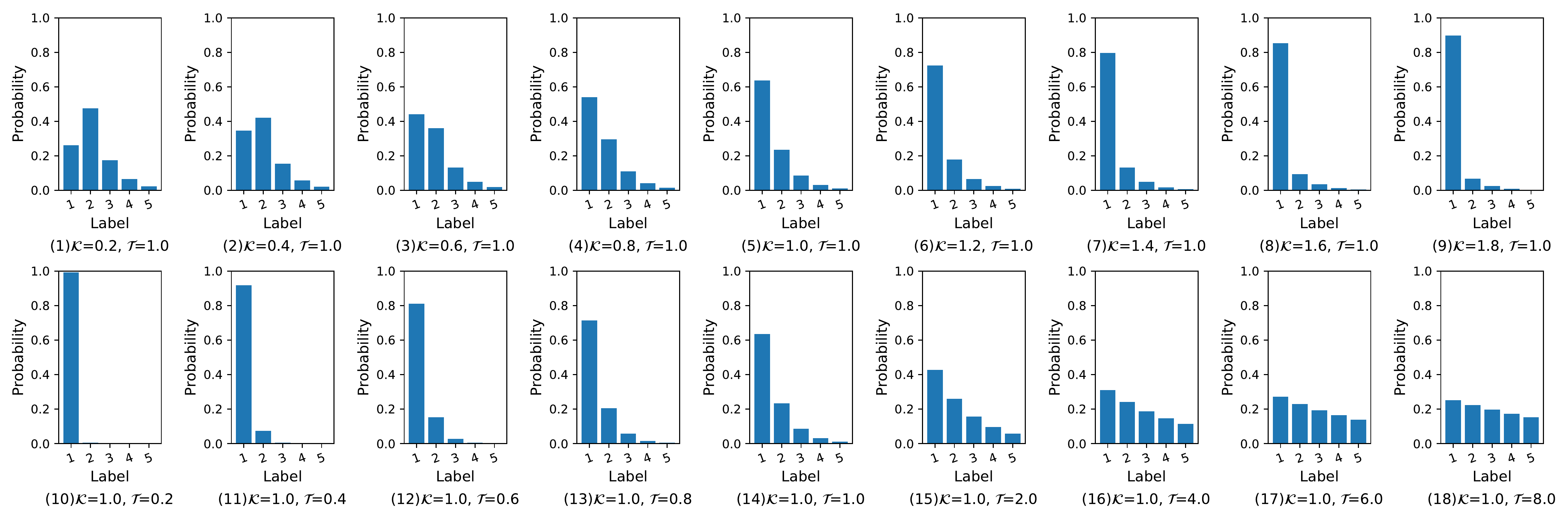}}
		\caption{A case of the probability output of our FSoftmax function with different size of $\mathcal{K}$ and $\mathcal{T}$ where the logit vector is $[2.0, 1.0, 0.0, -1.0, -2.0]$ and the ground truth label is 1. In subfigures (1)-(9), the parameter $\mathcal{T}$ is 1.0 and $\mathcal{K}$ changes from 0.2 to 2.0 with a step size of 0.2. In subfigures (10)-(18), the parameter $\mathcal{K}$ is 1.0 and $\mathcal{T}$ changes from 0.2 to 1.0 with a step size of 0.2 and from 1.0 to 8.0 with a step size of 2.0.}\vspace{-5mm}
		\label{FIG:different_K_T}
	\end{center}
\end{figure*}
\subsubsection{Confidence Scaling Mechanism}
\label{sec:confidence_scaling}

In order to skip the underfitting fuzzy domain, the gradient descent weight of the fuzziness of generated adversarial example needs to be increased for accelerating the decline of the metric. Hence, based on the definition of the fuzziness in Definition~\ref{definition:fuzziness},  \textit{the confidence scaling mechanism}, namely $CSM$, is proposed to directly increase the confidence of the ground truth label through introducing a parameter $\mathcal{K}$ to be muliplied with the logit corresponding to the ground truth label, which can be represented as
\begin{align}
CSM(\boldsymbol{z};\mathcal{K}) = \left\{ \begin{array}{c}
\mathcal{K}\cdot z_i,i=o\\
z_i,i\ne o\\
\end{array} \right.
\end{align}
where $\boldsymbol{z}=[z_1, z_2, \cdots, z_C]$ is the logit vector of the surrogate model $f$ output, $\mathcal{K}$ is the parameter to control the size of the gradient descent weight of fuzziness.



\subsubsection{Temperature Scaling Mechanism}
\label{sec:temperature_scaling}

To avoid the generated adversarial examples falling into an overfitting fuzzy domain when decreasing the fuzziness, \textit{temperature scaling mechanism}, namely $TSM$, which introduces a parameter $\mathcal{T}$ to divide the logit vector, is proposed to tune the fuzziness of generated adversarial examples indirectly through stabilizing the update direction during adversarial example generation and can be indicated as

\begin{align}
TSM(\boldsymbol{z};\mathcal{T})=\frac{\boldsymbol{z}}{\mathcal{T}}
\end{align}
where $\mathcal{T}$ is the parameter to control the stability of update direction.



\subsubsection{Loss Function of Fuzziness-tuned Method}
\label{sec:the_transformation_of_the_CE_loss}
To be consistent with existing works, Softmax function integrating both confidence scaling mechanism and temperature scaling mechanism, namely \textit{FSoftmax}, is proposed in our fuzziness-tuned method and can be represented as
%
\begin{gather}
FSoftmax \left( z_i;\mathcal{T},\mathcal{K} \right) = Softmax(TSM(CSM(\boldsymbol{z};\mathcal{K});\mathcal{T})) \nonumber \\
=\left\{ \begin{array}{c}
\frac{e^{\mathcal{K} \cdot z_o/\mathcal{T}}}{e^{\mathcal{K} \cdot z_o/\mathcal{T}}+\sum\nolimits_{c=1\land c\ne o}^C{e^{z_c/\mathcal{T}}}}, i=o\\
\frac{e^{z_i/\mathcal{T}}}{e^{\mathcal{K} \cdot z_o/\mathcal{T}}+\sum\nolimits_{c=1\land c\ne o}^C{e^{z_c/\mathcal{T}}}}, i\ne o\\
\end{array} \right.\label{equation:fesoftmax}
\end{gather}
where $C$ is the number of categories in the classification task. Note that the parameters $\mathcal{T}$ and $\mathcal{K}$ are greater than 0 (i.e., $\mathcal{T}>0$ and $\mathcal{K}>0$). 

Fig.~\ref{FIG:different_K_T} shows a case of how the probability output of our FSoftmax funciton is influenced by the parameters $\mathcal{K}$ and $\mathcal{T}$, which is beneficial to understand the performance analysis of the fuzziness-tuned method in Section~\ref{sec:performance_analysis}. In subfigures (1)-(9), with increase of the parameter $\mathcal{K}$, the probability of the ground truth label is becoming larger and the probabilities of all wrong labels are decreasing gradually. In subfigures (10)-(18), as the parameter $\mathcal{T}$ increases, the probability of each label is close to $\frac{1}{C}$ (i.e., 0.2) so that the size relationship among probabilities of different categories is constant.

Generally, the adversarial example is generated by maximizing the cross-entropy (CE) loss, which usually uses the standard Softmax function to calculate the probability output. In our paper, to tune the fuzziness of the generated adversarial example, the FSoftmax function defined in Equation~(\ref{equation:fesoftmax})  replaces the standard Softmax function, and fuzziness-tuned cross-entropy (FCE) loss is proposed to replace the default CE loss to generate the adversarial examples, which can be represented as

%
\begin{gather}
L_{FCE}\left( x,y_o;\theta ,\mathcal{T} ,\mathcal{K} \right) =-\log FSoftmax \left( z_o;\mathcal{T} ,\mathcal{K} \right) \nonumber \\
=L_{CCE}\left( x,y_o;\theta ,\mathcal{K} \right) \lor L_{TCE}\left( x,y_o;\theta ,\mathcal{T} \right) \label{equation:fece}
\end{gather}
where $L_{CCE}$ and $L_{TCE}$ 
are the loss based on confidence scaling mechanism and temperature scaling mechanism, respectively.

\subsection{Performance Analysis}
\label{sec:performance_analysis}
In this section, the performance of the confidence scaling mechanism and the temperature scaling mechanism is analyzed, respectively.

\subsubsection{Analysis of The Confidence Scaling Mechanism}
\label{sec:analysis_of_the_confidence_scaling_mechanism}
When $\mathcal{T}=1$, the FCE loss degenerates into confidence scaling-based cross-entropy loss, namely CCE loss, in which only the confidence scaling mechanism is involved in our fuzziness-tuned method:
\begin{align}
\label{equation:cce}
L_{CCE}(x,y_o;\theta,\mathcal{K}) = L_{FCE}(x,y_o;\theta, 1, \mathcal{K})
\end{align}
Proposition~\ref{proposition:CCE_property} demonstrates that, as the parameter $\mathcal{K}$ increases in the confidence scaling mechanism, the gradient descent weight of the fuzziness in adversarial example generation (denoted as $-\frac{\partial z_o}{\partial x}$) can be increased, thereby accelerating the decrease of the fuzziness of generated adversarial examples and ensuring the generated adversarial examples can skip out of the underfitting fuzzy domain and improve the transferability.

\begin{proposition}
	\label{proposition:CCE_property}
	In $\frac{\partial L_{CCE}}{\partial x}$, with the increase of the parameter $\mathcal{K}$, the weight of the item $(-\frac{\partial z_o}{\partial x})$ becomes larger.
\end{proposition}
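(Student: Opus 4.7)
The plan is to compute $\frac{\partial L_{CCE}}{\partial x}$ explicitly, group the chain-rule expansion by logit index, read off the scalar coefficient that multiplies $\left(-\frac{\partial z_o}{\partial x}\right)$, and then check monotonicity of that coefficient as a function of $\mathcal{K}$.

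First, I would substitute $\mathcal{T}=1$ into Equation~(\ref{equation:fece}) to rewrite $L_{CCE}$ in closed form as $-\mathcal{K}z_o+\log\!\bigl(e^{\mathcal{K}z_o}+\sum_{c\ne o}e^{z_c}\bigr)$. Treating every logit $z_c$ as a function of the input $x$ through the surrogate model $f$, I apply the chain rule once and separate the terms proportional to $\frac{\partial z_o}{\partial x}$ from those proportional to $\frac{\partial z_c}{\partial x}$ for $c\ne o$. This yields
\[
\frac{\partial L_{CCE}}{\partial x}=\mathcal{K}\,(1-p_o)\left(-\frac{\partial z_o}{\partial x}\right)+\sum_{c\ne o}p_c\,\frac{\partial z_c}{\partial x},
\]
where $p_o=e^{\mathcal{K}z_o}/(e^{\mathcal{K}z_o}+\sum_{c\ne o}e^{z_c})$ and $p_c=e^{z_c}/(e^{\mathcal{K}z_o}+\sum_{c\ne o}e^{z_c})$ are exactly the FSoftmax probabilities. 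The weight of the item $\left(-\frac{\partial z_o}{\partial x}\right)$ in the statement is therefore the scalar $W(\mathcal{K})=\mathcal{K}(1-p_o)$, so the proposition reduces to showing that $W$ is increasing in $\mathcal{K}$.

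The remaining step is a one-variable calculus check. Letting $S=\sum_{c\ne o}e^{z_c}>0$, a short quotient-rule computation gives $W'(\mathcal{K})=S\,\bigl[S+e^{\mathcal{K}z_o}(1-\mathcal{K}z_o)\bigr]/(e^{\mathcal{K}z_o}+S)^{2}$, and I would argue this is positive in the operating regime relevant to the confidence scaling mechanism: for moderate values of $\mathcal{K}z_o$, the bracket is dominated by $S>0$, so the direct amplification from the prefactor $\mathcal{K}$ outweighs the concentration of $p_o$ toward one that $\mathcal{K}$ simultaneously induces. This monotonicity check is where I expect the main difficulty to lie, because $W'(\mathcal{K})$ is not unconditionally positive: once $\mathcal{K}z_o$ becomes large and positive the bracket can flip sign, which is precisely the overfitting fuzzy domain that the temperature scaling mechanism is introduced to counteract in Section~\ref{sec:FTM}. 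I would therefore phrase the conclusion as monotone increase of the weight of $\left(-\frac{\partial z_o}{\partial x}\right)$ whenever the adversarial example has not yet concentrated $p_o$ near one, which is consistent with the role assigned to the confidence scaling mechanism, namely escaping the underfitting fuzzy domain rather than the overfitting one.
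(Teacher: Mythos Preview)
Your chain-rule expansion is correct and matches the paper's Equation~(\ref{equation:derivation_cce}) up to the harmless $1/\ln 2$ factor. The divergence is in what you call the ``weight''. You take it to be the raw coefficient $W(\mathcal{K})=\mathcal{K}(1-p_o)$ and then have to fight the fact that this quantity is not monotone in $\mathcal{K}$ once $\mathcal{K}z_o$ is large, which forces you into the hedged ``operating regime'' argument at the end.

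The paper sidesteps this entirely by interpreting ``weight'' as the \emph{relative} weight: it divides the coefficient of $\left(-\frac{\partial z_o}{\partial x}\right)$ by the sum of all the coefficients in the expansion. Since $\sum_{c\ne o}p_c=1-p_o$, that sum is $\mathcal{K}(1-p_o)+(1-p_o)=(\mathcal{K}+1)(1-p_o)$, and the normalized weight collapses to
\[
w_o=\frac{\mathcal{K}(1-p_o)}{(\mathcal{K}+1)(1-p_o)}=\frac{\mathcal{K}}{\mathcal{K}+1},
\]
which is unconditionally increasing in $\mathcal{K}$. So the missing idea is simply this normalization step: once you measure the contribution of $\left(-\frac{\partial z_o}{\partial x}\right)$ relative to the total, the dependence on $z_o$ cancels completely and the calculus check becomes trivial, with no need to invoke the underfitting/overfitting distinction.
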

\begin{proof}
	The derivation formula of $L_{CCE}(x,y_o;\theta,\mathcal{K})$ w.r.t. the input $x$ is:
	\begin{gather}
	\frac{\partial L_{CCE}}{\partial x}=-\frac{\partial L_{CCE}}{\partial z_o}\cdot ( -\frac{\partial z_o}{\partial x} ) +\sum_{i=1( i\ne o )}^C{\frac{\partial L_{CCE}}{\partial z_i}\cdot \frac{\partial z_i}{\partial x}} \nonumber \\
	= \frac{1}{\ln 2}\cdot (\frac{\mathcal{K}\cdot\sum\nolimits_{i=1(i\ne o)}^C{e^{z_i}}}{e^{\mathcal{K} \cdot z_o}+\sum\nolimits_{i=1(i\ne o)}^C{e^{z_i}}}\cdot (-\frac{\partial z_o}{\partial x}) \nonumber \\
	+\sum_{i=1(i\ne o)}^C{\frac{e^{z_i}}{e^{\mathcal{K} \cdot z_o}+\sum\nolimits_{j=1(j\ne o)}^C{e^{z_j}}}\cdot \frac{\partial z_i}{\partial x}}) 
	\label{equation:derivation_cce}
	\end{gather}
	According to Equation~(\ref{equation:derivation_cce}), the weight $w_o$ of the term $(-\frac{\partial z_o}{\partial x})$ is:
	\begin{gather}
	w_o=\frac{\frac{\mathcal{K}\cdot\sum\nolimits_{i=1(i\ne o)}^C{e^{z_i}}}{e^{\mathcal{K} \cdot z_o}+\sum\nolimits_{i=1(i\ne o)}^C{e^{z_i}}}}{\frac{\mathcal{K}\cdot\sum\nolimits_{i=1(i\ne o)}^C{e^{z_i}}}{e^{\mathcal{K} \cdot z_o}+\sum\nolimits_{i=1(i\ne o)}^C{e^{z_i}}}+\sum_{i=1(i\ne o)}^C{\frac{e^{z_i}}{e^{\mathcal{K} \cdot z_o}+\sum\nolimits_{j=1(j\ne o)}^C{e^{z_j}}}}} \nonumber \\
	=\frac{\mathcal{K}}{\mathcal{K}+1}
	\label{equation:wo_cce}
	\end{gather}
	Then, in Equation~(\ref{equation:wo_cce}), with the increase of $\mathcal{K}$, the weight $w_o$ becomes larger. 
\end{proof}


\subsubsection{Analysis of The Temperature Scaling Mechanism}
\label{sec:analysis_of_the_temperature_scaling_mechanism}
When $\mathcal{K}=1$, the FCE loss degenerates into temperature scaling-based cross-entropy loss, namely TCE loss, in which only the temperature scaling mechanism is involved in our fuzziness-tuned method:
\begin{align}
\label{equation:tce}
L_{TCE}(x,y_o;\theta,\mathcal{T}) = L_{FCE}(x,y_o;\theta, \mathcal{T}, 1)
\end{align}

Proposition~\ref{proposition:TCE_property1} verifies that, the increasing of the parameter $\mathcal{T}$ in the temperature scaling mechanism will not impact the weight of the fuzziness, i.e., $-\frac{\partial z_o}{\partial x}$. Meanwhile, Proposition~\ref{proposition:TCE_property2} certifies that, with the increase of the parameter $\mathcal{T}$,  the update direction on generating adversarial examples can be stabilized, thereby further indirectly decreasing the fuzziness of generated adverarial examples and avoiding the generated adversarial examples falling into overfitting fuzzy domain.
 
 \begin{proposition}
 	\label{proposition:TCE_property1}
 	In $\frac{\partial L_{TCE}}{\partial x}$, with the increase of the parameter $\mathcal{T}$, the weight of the item $(-\frac{\partial z_o}{\partial x})$ has not changed.
 \end{proposition}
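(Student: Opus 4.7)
The plan is to mirror the calculation in Proposition~\ref{proposition:CCE_property} almost exactly, exploiting the fact that setting $\mathcal{K}=1$ in the FSoftmax formula reduces it to the ordinary tempered softmax $e^{z_o/\mathcal{T}}/\sum_c e^{z_c/\mathcal{T}}$, so that every partial derivative with respect to a logit $z_c$ picks up the same scalar factor $1/\mathcal{T}$. Concretely, I would first write $L_{TCE}(x,y_o;\theta,\mathcal{T})=-\log FSoftmax(z_o;\mathcal{T},1)$ in the form $\tfrac{1}{\ln 2}\bigl(-z_o/\mathcal{T}+\ln\sum_c e^{z_c/\mathcal{T}}\bigr)$, and then apply the chain rule to expand $\partial L_{TCE}/\partial x$ as a sum over all coordinates $\partial z_c/\partial x$.

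Next I would compute the individual partials: $\partial L_{TCE}/\partial z_o = -\tfrac{1}{\mathcal{T}\ln 2}\cdot\tfrac{\sum_{i\ne o} e^{z_i/\mathcal{T}}}{\sum_c e^{z_c/\mathcal{T}}}$, and for $i\neq o$, $\partial L_{TCE}/\partial z_i = \tfrac{1}{\mathcal{T}\ln 2}\cdot\tfrac{e^{z_i/\mathcal{T}}}{\sum_c e^{z_c/\mathcal{T}}}$. Writing the gradient in the paper's canonical form
\begin{align*}
\frac{\partial L_{TCE}}{\partial x}=\frac{1}{\ln 2}\Bigl(\frac{1}{\mathcal{T}}\cdot\frac{\sum_{i\ne o}e^{z_i/\mathcal{T}}}{\sum_c e^{z_c/\mathcal{T}}}\Bigl(-\frac{\partial z_o}{\partial x}\Bigr)+\sum_{i\ne o}\frac{1}{\mathcal{T}}\cdot\frac{e^{z_i/\mathcal{T}}}{\sum_c e^{z_c/\mathcal{T}}}\cdot\frac{\partial z_i}{\partial x}\Bigr),
\end{align*}
I would then read off the (normalized) weight $w_o$ of $(-\partial z_o/\partial x)$ using exactly the same ratio convention as in Equation~(\ref{equation:wo_cce}).

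The key observation, which makes the claim essentially immediate, is that the prefactor $1/\mathcal{T}$ and the common denominator $\sum_c e^{z_c/\mathcal{T}}$ appear in every term of both the numerator and the denominator of $w_o$, so they cancel out uniformly. What remains is $w_o=\tfrac{\sum_{i\ne o}e^{z_i/\mathcal{T}}}{\sum_{i\ne o}e^{z_i/\mathcal{T}}+\sum_{i\ne o}e^{z_i/\mathcal{T}}}=\tfrac{1}{2}$, independent of $\mathcal{T}$. As a sanity check this agrees with plugging $\mathcal{K}=1$ into the formula $w_o=\mathcal{K}/(\mathcal{K}+1)$ derived in Proposition~\ref{proposition:CCE_property}.

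I do not expect any significant obstacle: the proof is a direct computation parallel to that of Proposition~\ref{proposition:CCE_property}, and the only subtle point is to make sure the notion of ``weight'' is taken in the same normalized sense used earlier so that the claim is stated on the same footing. The substantive content, beyond bookkeeping, is simply that $\mathcal{T}$ enters $L_{TCE}$ only through an overall rescaling $1/\mathcal{T}$ of each logit partial and through the symmetric reweighting of the softmax, both of which drop out of the ratio defining $w_o$.
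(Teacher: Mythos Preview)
Your proposal is correct and follows essentially the same route as the paper: expand $\partial L_{TCE}/\partial x$ via the chain rule into a weighted sum of the $\partial z_i/\partial x$, then form the same normalized ratio $w_o$ and observe that the common factor $1/\mathcal{T}$ and the shared denominator cancel, leaving $w_o=\tfrac{1}{2}$. The only cosmetic difference is that the paper writes the coefficient of $(-\partial z_o/\partial x)$ as $1-e^{z_o/\mathcal{T}}/\sum_i e^{z_i/\mathcal{T}}$ rather than your equivalent $\sum_{i\ne o}e^{z_i/\mathcal{T}}/\sum_c e^{z_c/\mathcal{T}}$.
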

 \begin{proof}
 	The derivation equation of $L_{TCE}(x,y_o;\theta,\mathcal{T})$ w.r.t. the input $x$ is:
 	\begin{gather}
 	\frac{\partial L_{TCE}}{\partial x}=-\frac{\partial L_{TCE}}{\partial z_o}\cdot ( -\frac{\partial z_o}{\partial x} ) +\sum_{i=1( i\ne o )}^C{\frac{\partial L_{TCE}}{\partial z_i}\cdot \frac{\partial z_i}{\partial x}} \nonumber \\
 	=\frac{1}{\mathcal{T}\cdot \ln 2}\cdot ( ( 1-\frac{e^{z_o/\mathcal{T}}}{\sum\nolimits_{i=1}^C{e^{z_i}/\mathcal{T}}} ) \cdot ( -\frac{\partial z_o}{\partial x} ) \nonumber \\
 	+\sum_{i=1( i\ne o )}^C{\frac{e^{z_i/\mathcal{T}}}{\sum\nolimits_{j=1}^C{e^{z_j/\mathcal{T}}}}\cdot \frac{\partial z_i}{\partial x}} )
 	\label{equation:derivation_tce}
 	\end{gather}
 	Equation~(\ref{equation:derivation_tce}) shows that the weight $w_o$ of the term $(-\frac{\partial z_o}{\partial x})$ is:
 	\begin{gather}
 	w_o = \frac{( 1-\frac{e^{z_o/\mathcal{T}}}{\sum\nolimits_{i=1}^C{e^{z_i}/\mathcal{T}}} )}{( 1-\frac{e^{z_o/\mathcal{T}}}{\sum\nolimits_{i=1}^C{e^{z_i}/\mathcal{T}}} )+\sum_{i=1( i\ne o )}^C{\frac{e^{z_i/\mathcal{T}}}{\sum\nolimits_{j=1}^C{e^{z_j/\mathcal{T}}}}}} =\frac{1}{2}
 	\label{equation:wo_tce}
 	\end{gather}
 	Therefore, in Equation~(\ref{equation:wo_tce}), the weight $w_o$ is not influenced by the parameter $\mathcal{T}$.
 \end{proof}

 \begin{proposition}
 	\label{proposition:TCE_property2}
 	For an input $x$, $\forall x_1, x_2\in \mathcal{B} (x,\epsilon)$, when the parameter $\mathcal{T}\rightarrow +\infty$ in our TCE loss, the angle $<\frac{\partial L_{TCE}(x_1,y_o;\theta,\mathcal{T})}{\partial x}, \frac{\partial L_{TCE}(x_2,y_o;\theta,\mathcal{T})}{\partial x}>$ achieves the minimum upper bound.
 \end{proposition}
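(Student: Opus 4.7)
The plan is to reduce the claim to a direction-only computation using the derivative formula in Equation~(\ref{equation:derivation_tce}) already established in the proof of Proposition~\ref{proposition:TCE_property1}. Since the prefactor $1/(\mathcal{T}\ln 2)$ is a positive scalar, it does not affect the angle, so it suffices to analyze the direction of
$g(x,\mathcal{T}) = (1-p_o)(-\nabla_x z_o) + \sum_{i\neq o} p_i\,\nabla_x z_i$,
where $p_i(x,\mathcal{T}) = e^{z_i/\mathcal{T}}/\sum_j e^{z_j/\mathcal{T}}$. Using the identity $1-p_o=\sum_{i\neq o}p_i$, I would rewrite this compactly as $g(x,\mathcal{T}) = \sum_{i\neq o}p_i(x,\mathcal{T})\bigl(\nabla_x z_i - \nabla_x z_o\bigr)$, which makes the dependence on the softmax weights explicit and cleanly separates the $\mathcal{T}$-dependent coefficients from the $x$-dependent gradient differences.

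First, I would establish the limiting direction. A Taylor expansion of $e^{z/\mathcal{T}}$ around $1/\mathcal{T}=0$ gives $p_i(x,\mathcal{T}) = 1/C + O(1/\mathcal{T})$ uniformly in $x$ over the compact ball $\mathcal{B}(x,\epsilon)$ (logits being continuous and bounded there). Hence $g(x,\mathcal{T}) \to g_\infty(x) := (1/C)\sum_{i\neq o}(\nabla_x z_i - \nabla_x z_o)$ as $\mathcal{T}\to+\infty$. The crucial point is that the limiting weights $1/C$ are identical constants for every $x$, so the direction $g_\infty(x)$ depends on $x$ only through the network Jacobian and not through the specific logit configuration. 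This is precisely the sense in which the update direction is ``stabilized.''

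Second, I would bound $\cos\theta(\mathcal{T}) = \langle g(x_1,\mathcal{T}), g(x_2,\mathcal{T})\rangle / (\|g(x_1,\mathcal{T})\|\,\|g(x_2,\mathcal{T})\|)$ from below by writing $g(x,\mathcal{T}) = g_\infty(x)+\Delta(x,\mathcal{T})$ with $\|\Delta(x,\mathcal{T})\|=O(1/\mathcal{T})$, then applying Cauchy--Schwarz to get $|\cos\theta(\mathcal{T})-\cos\theta_\infty|=O(1/\mathcal{T})$. Taking the supremum over $x_1,x_2\in\mathcal{B}(x,\epsilon)$ yields $\sup_{x_1,x_2}\theta(\mathcal{T}) \le \sup_{x_1,x_2}\theta_\infty + O(1/\mathcal{T})$, which collapses into the limiting value as $\mathcal{T}\to+\infty$. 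The intended reading of ``achieves the minimum upper bound'' is that this supremum-over-pairs angle attains its infimum in the limit $\mathcal{T}\to+\infty$.

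The main obstacle is making ``minimum upper bound'' fully rigorous. Strict monotonicity of $\sup_{x_1,x_2}\theta(\mathcal{T})$ in $\mathcal{T}$ does not follow immediately, because the weights $p_i$ depend nonlinearly on both $\mathcal{T}$ and the logits. A clean proof would either differentiate $\cos\theta$ in $\mathcal{T}$ and check its sign, or invoke a Jensen-type argument on the simplex of weights to show that the uniform combination minimizes angular dispersion across the ball. A plausible route the authors may take is to settle for the weaker but sufficient reading that the worst-case angle is \emph{attained} in the $\mathcal{T}\to+\infty$ limit, which follows directly from the uniform $O(1/\mathcal{T})$ bound above. A secondary subtlety is that $\|g(x,\mathcal{T})\|$ could degenerate if the vectors $\nabla_x z_i-\nabla_x z_o$ happen to cancel, so one must either impose a mild non-degeneracy assumption on the surrogate model or carry out the whole argument in terms of the normalized direction $g(x,\mathcal{T})/\|g(x,\mathcal{T})\|$.
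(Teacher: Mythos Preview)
Your approach is reasonable but departs substantially from the paper's argument, and in doing so misses what the proposition actually claims. The paper does \emph{not} study the supremum angle $\sup_{x_1,x_2}\theta(\mathcal{T})$ directly or argue by perturbation around the limit $\mathcal{T}\to+\infty$. Instead it (i) \emph{assumes} that $\{-\partial z_o/\partial x,\partial z_2/\partial x,\ldots,\partial z_C/\partial x\}$ is an orthogonal basis at every point, (ii) introduces the all-ones reference vectors $\boldsymbol{\mu},\boldsymbol{\nu}$ in these bases at $x_1,x_2$, (iii) applies a triangle inequality to get the explicit upper bound $\langle g(x_1),\boldsymbol{\mu}\rangle+\langle\boldsymbol{\mu},\boldsymbol{\nu}\rangle+\langle\boldsymbol{\nu},g(x_2)\rangle$, and (iv) uses the Cauchy inequality $(p_2+\cdots+p_C)^2\le(C-1)(p_2^2+\cdots+p_C^2)$ to show that each of the two outer angles equals its minimum $\arccos\frac{2\sqrt{C-1}}{C}$ precisely when $p_2=\cdots=p_C$, i.e., when $\mathcal{T}\to+\infty$. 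So ``minimum upper bound'' in the paper means: \emph{this particular triangle-inequality bound}, namely $2\arccos\frac{2\sqrt{C-1}}{C}+\langle\boldsymbol{\mu},\boldsymbol{\nu}\rangle$, is minimized.

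Your route is in some respects cleaner---you avoid the strong and rather unrealistic orthogonality assumption and work directly with the softmax weights via $p_i=1/C+O(1/\mathcal{T})$---but it proves a different thing (convergence of the angle to a $\mathcal{T}$-independent limit) and, as you yourself note, does not establish minimality without an extra monotonicity or Jensen-type argument. The paper sidesteps that difficulty entirely: once the orthogonality assumption is granted, the Cauchy inequality gives equality exactly at the uniform weights, so minimality of the \emph{bound} (not of the actual angle) is immediate. If you want to match the paper, you need the reference-vector decomposition and the explicit $\arccos$ computation; if you prefer your analytic route, you should state clearly that you are proving convergence of the angle rather than minimality of a specific bound, and either supply the missing monotonicity step or reinterpret the proposition accordingly.
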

 \begin{proof}
 	Because of the uniqueness of the characteristics of different categories, according to Equation~(\ref{equation:derivation_tce}), we assume that $z_1$ is the logit of the ground truth label $y_o$ (i.e., $z_1=z_o$) and $\{-\frac{\partial z_o}{\partial x}, \frac{\partial z_2}{\partial x},\cdots,\frac{\partial z_C}{\partial x}\}$ in $\frac{\partial L_{TCE}(x,y_o;\theta,\mathcal{T})}{\partial x}$ is an orthogonal basis for any $x$.
 	
 	In the orthogonal basis $\{-\frac{\partial z_o}{\partial x}|_{x_1}, \frac{\partial z_2}{\partial x}|_{x_1},\cdots,\frac{\partial z_C}{\partial x}|_{x_1}\}$, we choose a reference vector $\boldsymbol{\mu}=[1,1,\cdots,1]$. Similarly, in the orthogonal basis $\{-\frac{\partial z_o}{\partial x}|_{x_2}, \frac{\partial z_2}{\partial x}|_{x_2},\cdots,\frac{\partial z_C}{\partial x}|_{x_2}\}$, we choose a reference vector $\boldsymbol{\nu}=[1,1,\cdots,1]$. 
 	
 	Hence, the upper bound of the angle $<\frac{\partial L_{TCE}(x_1,y_o;\theta,\mathcal{T})}{\partial x}, \frac{\partial L_{TCE}(x_2,y_o;\theta,\mathcal{T})}{\partial x}>$ is: 
 	\begin{gather}
 	\max<\frac{\partial L_{TCE}(x_1,y_o;\theta,\mathcal{T})}{\partial x}, \frac{\partial L_{TCE}(x_2,y_o;\theta,\mathcal{T})}{\partial x}> \nonumber \\
 	=<\frac{\partial L_{TCE}(x_1,y_o;\theta,\mathcal{T})}{\partial x}, \boldsymbol{\mu}>+<\boldsymbol{\mu},\boldsymbol{\nu}> \nonumber \\
 	+ <\boldsymbol{\nu},\frac{\partial L_{TCE}(x_2,y_o;\theta,\mathcal{T})}{\partial x}>
 	\label{equation:upper_bound_of_angle}
 	\end{gather}
 	where $<\frac{\partial L_{TCE}(x_1,y_o;\theta,\mathcal{T})}{\partial x}, \boldsymbol{\mu}>$ and $<\boldsymbol{\nu},\frac{\partial L_{TCE}(x_2,y_o;\theta,\mathcal{T})}{\partial x}>$, according to Equation~(\ref{equation:derivation_tce}), can be represented as
 	\begin{gather}
 	\mathrm{arc}\cos \frac{[\sum\nolimits_{i=2}^C{p_i}, p_2,\cdots,p_C]\cdot [1,1,\cdots,1]}{\left\|[\sum\nolimits_{i=2}^C{p_i},p_2,\cdots,p_C]\right\|\left\|[1,1,\cdots,1]\right\|} \nonumber \\
 	=\mathrm{arc}\cos \frac{2}{\sqrt{C}}\cdot \frac{1}{\sqrt{1+\frac{p_2^2+p_3^2+\cdots+p_C^2}{(p_2+p_3+\cdots+p_C)^2}}}
 	\label{equation:arccos_tce}
 	\end{gather}
 	where $p_i=\frac{e^{z_i/\mathcal{T}}}{\sum\nolimits_{j=1}^C{e^{z_j/\mathcal{T}}}}$. According to Cauchy inequality, 
 	\begin{gather}
 	\frac{2}{\sqrt{C}}\cdot \frac{1}{\sqrt{1+\frac{p_2^2+p_3^2+\cdots+p_C^2}{(p_2+p_3+\cdots+p_C)^2}}}\leq \frac{2/\sqrt{C}}{\sqrt{1+\frac{p_2^2+p_3^2+\cdots+p_C^2}{(C-1)(p_2^2+p_3^2+\cdots+p_C^2)}}} \nonumber \\
 	\Rightarrow \frac{2}{\sqrt{C}}\cdot \frac{1}{\sqrt{1+\frac{p_2^2+p_3^2+\cdots+p_C^2}{(p_2+p_3+\cdots+p_C)^2}}}\leq \frac{2\sqrt{C-1}}{C} \nonumber \\
 	\Rightarrow 
 	\left\{ \begin{array}{c}
 	<\frac{\partial L_{TCE}(x_1,y_o;\theta,\mathcal{T})}{\partial x}, \boldsymbol{\mu}>\geq \mathrm{arc}\cos \frac{2\sqrt{C-1}}{C} \\
 	<\boldsymbol{\nu},\frac{\partial L_{TCE}(x_2,y_o;\theta,\mathcal{T})}{\partial x}>\geq \mathrm{arc}\cos \frac{2\sqrt{C-1}}{C} \\
 	\end{array} \right..
 	\label{equation:inequality_tce}
 	\end{gather}
 	When $p_2=p_3=\cdots=p_C$, Equation~(\ref{equation:inequality_tce}) gets the equality, thereby the angles $<\frac{\partial L_{TCE}(x_1,y_o;\theta,\mathcal{T})}{\partial x}, \boldsymbol{\mu}>$ and $<\boldsymbol{\nu},\frac{\partial L_{TCE}(x_2,y_o;\theta,\mathcal{T})}{\partial x}>$ achieve the minimum value. Note that the angle $<\boldsymbol{\mu}, \boldsymbol{\nu}>$ is a deterministic value.
 	
 	When the parameter $\mathcal{T}\rightarrow +\infty$, $p_2=p_3=\cdots=p_C=\frac{1}{C}$ (Fig.~\ref{FIG:different_K_T} shows a case). Therefore, in Equation~(\ref{equation:upper_bound_of_angle}), the angle $<\frac{\partial L_{TCE}(x_1,y_o;\theta,\mathcal{T})}{\partial x}, \frac{\partial L_{TCE}(x_2,y_o;\theta,\mathcal{T})}{\partial x}>$ achieves the minimum upper bound $2\mathrm{arc}\cos \frac{2\sqrt{C-1}}{C}+<\boldsymbol{\mu},\boldsymbol{\nu}>$.
 \end{proof}
Additionally, Proposition~\ref{proposition:relationship_between_TCE_and_RCE} shows that when the parameter $\mathcal{T}$ goes to $+\infty$, our TCE loss-based gradient sign attacks (i.e., I/MI/SINI/VMI-FGSM, etc.) can be equivalent to RCE loss-based gradient sign attacks~\cite{RCE}. That is, the RCE loss can be a special case of our temperature scaling mechanism with the parameter $\mathcal{T}$ being $+\infty$. Note that, the relative cross-entropy (RCE) loss~\cite{RCE} was proposed to guide the logit to be updated in the direction of implicitly maximizing the rank distance from the ground truth label during adversarial example generation, which can achieve great performance on the target attacks.

\begin{proposition}
\label{proposition:relationship_between_TCE_and_RCE}
When the parameter $\mathcal{T}\rightarrow +\infty$ in our TCE loss, $sign(\frac{\partial L_{RCE}}{\partial x})\approx sign(\frac{\partial L_{TCE}}{\partial x})$.
\end{proposition}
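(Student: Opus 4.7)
The plan is to show that both gradients, $\partial L_{TCE}/\partial x$ (in the $\mathcal{T}\to+\infty$ limit) and $\partial L_{RCE}/\partial x$, are equal up to multiplication by a strictly positive scalar, after which the claim on signs is immediate. I would carry this out in three steps.

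First, I would take the expression for $\partial L_{TCE}/\partial x$ already derived in Equation~(\ref{equation:derivation_tce}) and pass to the limit. Writing $p_i = e^{z_i/\mathcal{T}}/\sum_{j=1}^C e^{z_j/\mathcal{T}}$, one observes $p_i \to 1/C$ for every $i$ as $\mathcal{T}\to +\infty$. Substituting and using $1 - p_o \to (C-1)/C$, the gradient becomes
\begin{equation*}
\frac{\partial L_{TCE}}{\partial x} \;\longrightarrow\; \frac{1}{\mathcal{T}\ln 2}\left(-\frac{\partial z_o}{\partial x} + \frac{1}{C}\sum_{i=1}^C \frac{\partial z_i}{\partial x}\right),
\end{equation*}
where the $-\partial z_o/\partial x$ arises from combining the $(C-1)/C$ coefficient with the $1/C$ contribution from the $i=o$ index of the sum.

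Second, I would compute $\partial L_{RCE}/\partial x$ directly from Equation~(\ref{equation:rce}). Setting $q_c = \mathrm{Softmax}(z_c)$, the standard softmax derivative gives $\partial L_{CE}(x,y_c;\theta)/\partial z_i = q_i - \delta_{ic}$. Summing over $c$ inside the averaging term, the $q_i$ contributions cancel against the corresponding term from $L_{CE}(x,y_o;\theta)$, leaving
\begin{equation*}
\frac{\partial L_{RCE}}{\partial z_i} = \frac{1}{C} - \delta_{io}, \qquad \frac{\partial L_{RCE}}{\partial x} = -\frac{\partial z_o}{\partial x} + \frac{1}{C}\sum_{i=1}^C \frac{\partial z_i}{\partial x}.
\end{equation*}

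Third, comparing the two expressions, the limiting $\partial L_{TCE}/\partial x$ equals $\partial L_{RCE}/\partial x$ multiplied by the scalar $1/(\mathcal{T}\ln 2) > 0$. Since the $\mathrm{sign}(\cdot)$ function is coordinate-wise and invariant under multiplication by positive scalars, taking $\mathrm{sign}$ of both sides yields $\mathrm{sign}(\partial L_{TCE}/\partial x)\approx \mathrm{sign}(\partial L_{RCE}/\partial x)$ for sufficiently large $\mathcal{T}$, which is exactly the claim.

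The proof is essentially mechanical, so there is no deep obstacle; the one subtle point worth flagging is justifying that the residual approximation error from $p_i - 1/C$ does not flip any coordinate's sign. Strictly speaking this only holds at points where the limiting gradient has no vanishing coordinate, which is why the statement uses $\approx$ rather than exact equality. I would note this caveat briefly but not belabor it, since the proposition is stated in the asymptotic ``approximately equal'' form.
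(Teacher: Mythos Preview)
Your proposal is correct and follows essentially the same route as the paper: compute both gradients, pass to the $\mathcal{T}\to+\infty$ limit so that the softmax weights become uniform $1/C$, and observe that the two expressions agree up to a positive scalar (the paper leaves the coefficients as $(C-1)/C$ and $1/C$ rather than collapsing them into your compact form, but the content is identical). One cosmetic slip: you inherit the $1/\ln 2$ factor from Equation~(\ref{equation:derivation_tce}) in step one but use the natural-log softmax derivative in step two, so the true ratio is $1/\mathcal{T}$ rather than $1/(\mathcal{T}\ln 2)$---immaterial for the sign conclusion, but worth tidying.
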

\begin{proof}
The derivation formula of $L_{RCE}(x,y_o;\theta)$ and $L_{TCE}(x,y_o;\theta,\mathcal{T})$ w.r.t. the input $x$ are Equations~(\ref{equation:derivation_rce}) and (\ref{equation:derivation_tce}), respectively.
\begin{gather}
\frac{\partial L_{RCE}}{\partial x}=-\frac{\partial L_{RCE}}{\partial z_o}\cdot ( -\frac{\partial z_o}{\partial x} ) +\sum_{i=1( i\ne o )}^C{\frac{\partial L_{RCE}}{\partial z_i}\cdot \frac{\partial z_i}{\partial x}} \nonumber \\
=\frac{1}{\ln 2}\cdot ( \frac{C-1}{C}\cdot( -\frac{\partial z_o}{\partial x} ) +\sum_{i=1(i\ne o)}^C{\frac{1}{C}\cdot \frac{\partial z_i}{\partial x}} )
\label{equation:derivation_rce}
\end{gather}
In Equation~(\ref{equation:derivation_tce}), when $\mathcal{T}\rightarrow +\infty$, $\left( 1-\frac{e^{z_o/\mathcal{T}}}{\sum\nolimits_{i=1}^C{e^{z_i/\mathcal{T}}}} \right) \approx \frac{C-1}{C}$ and $\frac{e^{z_i/\mathcal{T}}}{\sum\nolimits_{j=1}^C{e^{z_j/\mathcal{T}}}} \approx \frac{1}{C}$. Then, 
\begin{align}
sign(\frac{\partial L_{RCE}}{\partial x}) \approx sign(\frac{\partial L_{TCE}}{\partial x})
\end{align}
\end{proof}

\section{Experiments}
\label{sec:experiments}

\subsection{Experimental Setup}
\label{sec:experimental_setup}
\textbf{Datasets.} We randomly pick 2000 clean images from CIFAR10/100~\cite{CIFAR} test datasets and 1000 clean images from ImageNet~\cite{ImageNet} validation set, which can be correctly classified by all deep learning models used in each dataset.

\textbf{Models.} Nine naturally trained models are considered in our evaluations, including VGG16~\cite{VGG}, VGG19~\cite{VGG}, ResNet50~\cite{ResNet}, ResNet152~\cite{ResNet}, ResNext50~\cite{ResNext}, WideResNet-16-4 (WRN-16-4)~\cite{WideResNet}, Inception-v3~\cite{Inception-v3}, DenseNet121~\cite{DenseNet} and Mobilenet-v2~\cite{MobileNet}. Six of these models are selected for each dataset as the surrogate and victim models, wherein VGG16 and ResNet50 are determined as the surrogate models respectively on three datasets and ResNet152 is additionally added as the surrogate model on ImageNet. We train the models from scratch on CIFAR10/100 datasets and adopt the pre-trained models in~\cite{pytorch-image-models,torchvision-models} on the ImageNet dataset.

Eight defense methods include six input modifications, which are Resized and Padding (RP)~\cite{RP}, Bit Reduction (Bit-Red)~\cite{Bit-Red}, JPEG compression (JPEG)~\cite{JPEG}, Feature Distillation (FD)~\cite{FD}, Neural Representation Purifier (NRP)~\cite{NRP} and Randomized Smoothing (RS)~\cite{RS}, and two adversarial trained models, which are adversarial Inception-v3 (Inc-v3$^{adv}$) and ensemble adversarial Inception-ResNet-v2~\cite{Ensemble-Adversarial-Training} (IncRes-v2$^{adv}_{ens}$). The six input modifications adopt MobileNet-v2~\cite{MobileNet} as the victim model on ImageNet.

\textbf{Baselines.} Eight baselines are considered, including FGSM~\cite{FGSM}, I-FGSM~\cite{I-FGSM}, DI-FGSM~\cite{DI-FGSM}, MI-FGSM~\cite{MI-FGSM}, SINI-FGSM~\cite{SI-NI-FGSM}, VMI-FGSM~\cite{VMI-FGSM}, FIA~\cite{FIA} and SGM~\cite{SGM}, where the source code for the FGSM family attacks (i..e, first six baselines) comes from the repository (named Torchattacks~\cite{Torchattacks}). Additionally, the RCE loss~\cite{RCE} is combined with these attacks to be considered as the baselines in our evaluations as well.

\textbf{Metric.} In the evaluations, the adversarial examples are generated by one surrogate model to attack against five victim models, and the average attack success rates (ASRs) of attacks on five victim models are considered as the metric to evaluate the transferability of adversarial examples generated by these attacks, in which the adversarial examples are generated by the surrogate model and used to attack against the victim models. Obviously,  the higher the ASR of attacks on the victim models, the higher the transferability of adversarial examples generated by these attacks.

\textbf{Hyper-parameters.} 
Recall that, our fuzziness-tuned method is proposed to improve the transferability of adversarial examples generated by transfer-based attacks with low attack strength. Hence, the attack strength in the evaluations is set as $8/255$ (i.e., $\epsilon=8/255$), which is comparatively low in comparison with $\epsilon=16/255$. The number of steps and step length of all iterative-based attacks (i.e., I/MI/DI/SINI/VMI-FGSM, FIA and SGM) are set as $T=10, \alpha=0.8/255$, respectively. In the momentum-based attacks (i.e.,  MI/SINI/VMI-FGSM, FIA and SGM), the decay factor $\mu$ is set as 1.0. For DI-FGSM, the resize rate and diversity probability are set as 0.9 and 0.5, respectively. For SINI-FGSM, the number of scale copies is set as 5. For VMI-FGSM, the number of examples and the upper bound of neighborhood are set as $N=20, \beta=1.5$, respectively. For FIA, the drop probability and the ensemble number are set as $p_d=0.3, N=30$, respectively, and the chosen feature layers of ResNet50/152 and VGG16 are set as the last layer of the third block and Conv3\_3, respectively. For SGM, the decay factor $\gamma$ is set as 0.2 for ResNet50/152 to reduce the gradient from the residual modules. 

\begin{figure*}[ht]
	\begin{center}
		\centerline{\includegraphics[width=\textwidth]{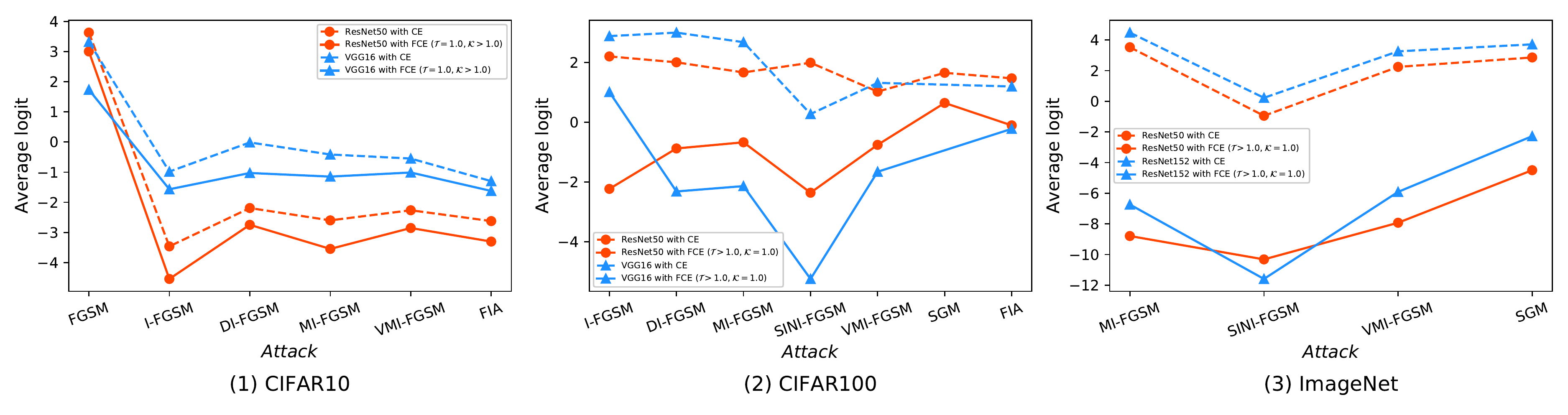}}
		\caption{The average logit of the correct category (or, The average fuzziness) on 2000 CIFAR10/100 test examples (1000 ImageNet test examples) under the untarget attack setting. Different colors of the curve denote different used surrogate models. The solid line denotes using our CCE or TCE loss and the dotted line denotes CE loss.}\vspace{-5mm}
		\label{FIG:average_logit_of_different_attacks}
	\end{center}
\end{figure*}
\begin{figure*}[ht]
	\begin{center}
		\centerline{\includegraphics[width=\textwidth]{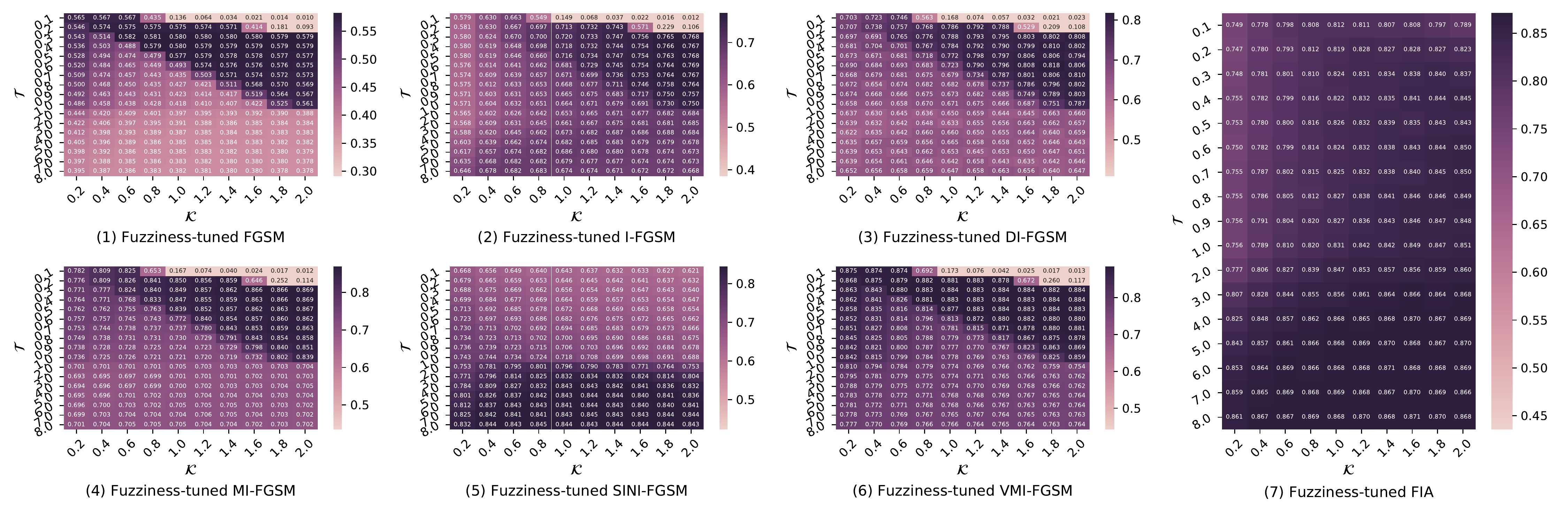}}
		\caption{The average untarget attack success rates (\%) to five victim models with VGG16 as the surrogate model on CIFAR10.}\vspace{-5mm}
		\label{FIG:Ablation_of_K_T_on_CIFAR10_with_VGG16}
	\end{center}
\end{figure*}
\begin{figure*}[ht]
	\begin{center}
		\centerline{\includegraphics[width=\textwidth]{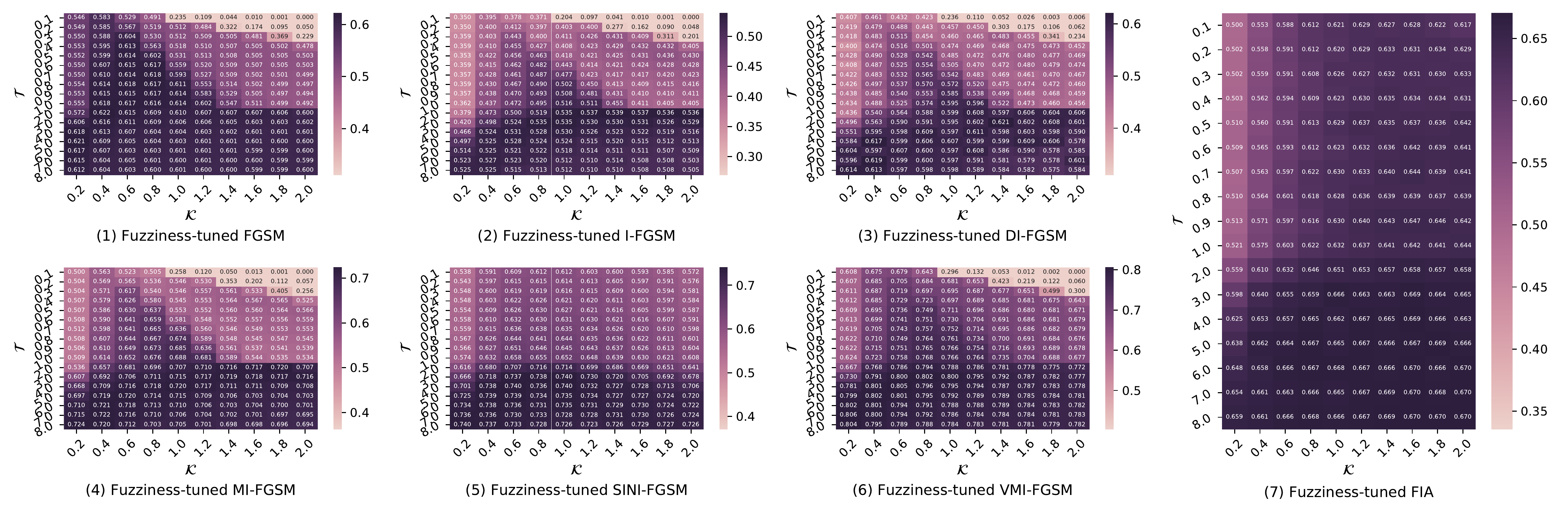}}
		\caption{The average untarget attack success rates (\%) to five victim models with VGG16 as the surrogate model on CIFAR100.}\vspace{-5mm}
		\label{FIG:Ablation_of_K_T_on_CIFAR100_with_VGG16}
	\end{center}
\end{figure*}
\begin{figure*}[ht]
	\begin{center}
		\centerline{\includegraphics[width=\textwidth]{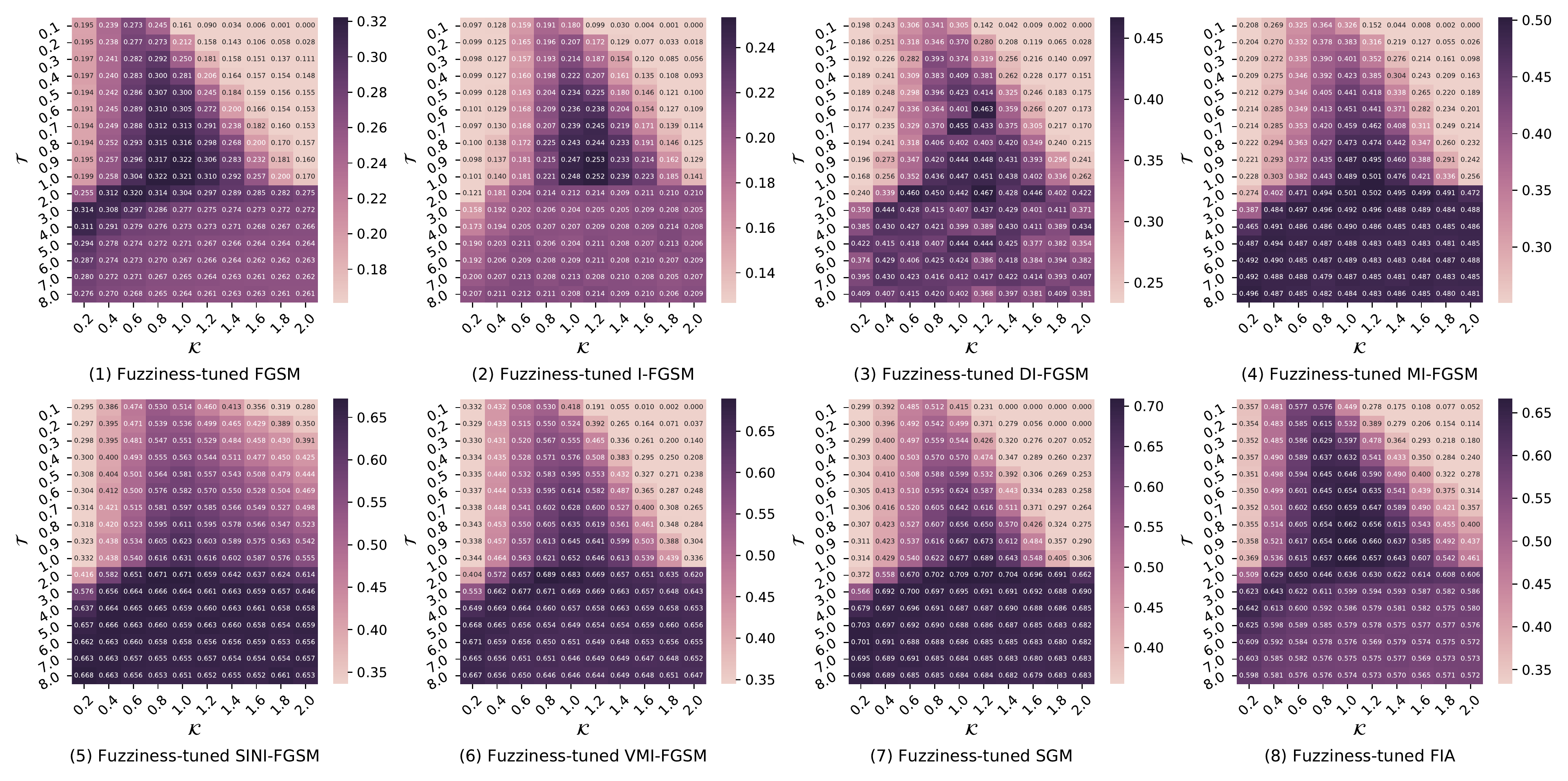}}
		\caption{The average untarget attack success rates (\%) to five victim models with ResNet50 as the surrogate model on ImageNet.}\vspace{-5mm}
		\label{FIG:Ablation_of_K_T_on_ImageNet_with_ResNet50}
	\end{center}
\end{figure*}

\subsection{The Impact of Various Parameter $\mathcal{K}$ and $\mathcal{T}$ on the ASRs of Attacks} 
In this section, the impact of various parameters $\mathcal{K}$ and $\mathcal{T}$ in our fuzziness-tuned method on the average ASRs of attacks against five victim models is evaluated, in which parameter $\mathcal{K}$ varies in $[0.2, 2]$ and parameter $\mathcal{T}$ varies in $[0.1, 1]$ and $[1, 8]$, respectively. 

Fig.~\ref{FIG:average_logit_of_different_attacks} shows the average fuzziness comparison on different datasets when using the fuzziness-tuned method on different surrogate models. Fig.~\ref{FIG:Ablation_of_K_T_on_CIFAR10_with_VGG16} and Fig.~\ref{FIG:Ablation_of_K_T_on_CIFAR100_with_VGG16} show the average ASRs of attacks with considering VGG16 as the surrogate model on CIFAR10 and CIFAR100, and Fig.~\ref{FIG:Ablation_of_K_T_on_ImageNet_with_ResNet50} shows the average ASRs of attacks with considering ResNet50 as the surrogate model on ImageNet, when various parameter $\mathcal{K}$ and $\mathcal{T}$ is applied in our fuzziness-tuned method. In addition to Fig.~\ref{FIG:Ablation_of_K_T_on_CIFAR10_with_VGG16}, \ref{FIG:Ablation_of_K_T_on_CIFAR100_with_VGG16} and \ref{FIG:Ablation_of_K_T_on_ImageNet_with_ResNet50}, the supplementary compelements the average ASR heat maps of attacks with other surrogate model on three datasets.

As shown in subfigure (1) of Fig.~\ref{FIG:average_logit_of_different_attacks} and Fig.~\ref{FIG:Ablation_of_K_T_on_CIFAR10_with_VGG16}, in comparison with attack applied with default CE loss, with the increase of  the parameter $\mathcal{K}$ (i.e., $\mathcal{K}>1.0$),  the generated adversarial examples in most of attacks applied with our fuzziness-tuned method can achieve less average fuzziness and higher the attack success rates on CIFAR10, i.e., the transferability of generated adversarial examples can be enhanced. The results can effectively verify the Proposition~\ref{proposition:CCE_property}  in Section~\ref{sec:analysis_of_the_confidence_scaling_mechanism}.

As shown in subfigures (2)-(3) of Fig.~\ref{FIG:average_logit_of_different_attacks} and Fig.~\ref{FIG:Ablation_of_K_T_on_CIFAR100_with_VGG16} and Fig.~\ref{FIG:Ablation_of_K_T_on_ImageNet_with_ResNet50}, with the increase of parameter $\mathcal{T}$ (i.e., $\mathcal{T}>1.0$) in our fuzziness-tuned method, attacks applied with our fuzziness-tuned method can achieve less average fuzziness and greater ASRs than attacks applied with defaut CE loss on CIFAR100 and ImageNet, which demonstrates that our fuzziness-tuned method can effectively enhance the transferability of generated adversarial examples. The results can effectively verify the Proposition~\ref{proposition:TCE_property1} and Proposition~\ref{proposition:TCE_property2} in Section~\ref{sec:analysis_of_the_temperature_scaling_mechanism}.

In addition, as shown in Fig.~\ref{FIG:Ablation_of_K_T_on_CIFAR100_with_VGG16} and Fig.~\ref{FIG:Ablation_of_K_T_on_ImageNet_with_ResNet50}, our fuzziness-tuned method can achieve positive effectiveness on attacks that use the momentum and variance tuning to stabilize the update direction, e.g., MI/SINI/VMI-FGSM and SGM.  For these attacks, due to the temperature scaling mechanism can further stabilize the update direction according to Proposition~\ref{proposition:TCE_property2}, the average ASRs of MI/SINI/VMI-FGSM and SGM attacks with our fuzziness-tuned method can be increased as the parameter $\mathcal{T}$ increases. The results show that,  with the increase of parameter $\mathcal{T}$ in our fuzziness-tuned method, the update direction of the momentum or variance tuning based attacks, e.g. MI/SINI/VMI-FGSM, can be further stabilized, thereby enhancing the transferability of adversarial examples generated by these attacks. 

\begin{table*}[h]
	\caption{The untarget attack success rates (\%) on six models without defenses for CIFAR10 using various transfer attacks with the attack strength $\epsilon=8/255$. The adversarial examples are generated by VGG16 and ResNet50, respectively. $*$ denotes the attack success rates under white-box attacks. "\ding{52}" and "\ding{56}" denote whether to use fuzziness-tuned method. \textbf{Average} means to calculate the average value except $*$.}
	\label{tab:cifar10_no_defenses}
	\renewcommand\arraystretch{1.1}
	\begin{center}
		\begin{tabular}{cccccccccc}
			\hline
			Model                     & Attack                     & Fuzziness-tuned  & VGG16          & VGG19          & ResNet50           & WRN-16-4       & DenseNet121       & MobileNet-v2         & \textbf{Average} \\ \hline
			\multirow{6}{*}{VGG16}    & \multirow{2}{*}{SINI-FGSM} & \ding{56}    & 93.5*          & 70.55          & 66.3            & 76.55          & 71.05          & 74.55          & 71.8             \\
			&                            & \ding{52}   & \textbf{98.4*} & \textbf{85.3}  & \textbf{79.3}   & \textbf{86.9}  & \textbf{84.55} & \textbf{86.4}  & \textbf{84.49}   \\ \cline{2-10} 
			& \multirow{2}{*}{VMI-FGSM}  & \ding{56}    & 94.15*         & 79.1           & 74.95           & 80.0           & 76.7           & 78.15          & 77.78            \\
			&                            & \ding{52}   & \textbf{99.5*} & \textbf{89.6}  & \textbf{85.3}   & \textbf{89.9}  & \textbf{88.25} & \textbf{89.05} & \textbf{88.42}   \\ \cline{2-10} 
			& \multirow{2}{*}{FIA}       & \ding{56} & 99.5*          & 86.5           & 82.55           & 90.25          & 85.4           & 88.45          & 86.63            \\
			&                            & \ding{52}   & \textbf{99.6*} & \textbf{87.25} & \textbf{82.95}  & \textbf{90.55} & \textbf{86.3}  & \textbf{88.5}  & \textbf{87.11}   \\ \hline
			\multirow{6}{*}{ResNet50} & \multirow{2}{*}{SINI-FGSM} & \ding{56}    & 71.95          & 74.1           & 97.6*           & 91.85          & 89.2           & 85.8           & 82.58            \\
			&                            & \ding{52}   & \textbf{81.45} & \textbf{84.3}  & \textbf{99.6*}  & \textbf{96.95} & \textbf{95.1}  & \textbf{92.75} & \textbf{90.11}   \\ \cline{2-10} 
			& \multirow{2}{*}{VMI-FGSM}  & \ding{56}    & 79.15          & 80.55          & 99.0*           & 94.0           & 90.25          & 87.95          & 86.38            \\
			&                            & \ding{52}   & \textbf{83.15} & \textbf{85.15} & \textbf{100*}   & \textbf{95.7}  & \textbf{92.3}  & \textbf{89.85} & \textbf{89.23}   \\ \cline{2-10} 
			& \multirow{2}{*}{FIA}       & \ding{56} & 84.55          & 85.0           & 99.25*          & 95.7           & 94.25          & 92.05          & 90.31            \\
			&                            & \ding{52}   & \textbf{86.3}  & \textbf{86.4}  & \textbf{99.85*} & \textbf{97.0}  & \textbf{94.95} & \textbf{93.6}  & \textbf{91.65}   \\ \hline
		\end{tabular}
	\end{center}
\end{table*}
\begin{table*}[h]
	\caption{The untarget attack success rates (\%) on six models without defenses for CIFAR100 using various transfer attacks with the attack strength $\epsilon=8/255$. The adversarial examples are generated by VGG16 and ResNet50, respectively. $*$ denotes the attack success rates under white-box attacks. "\ding{52}" and "\ding{56}" denote whether to use fuzziness-tuned method. \textbf{Average} means to calculate the average value except $*$.}
	\label{tab:cifar100_no_defenses}
	\renewcommand\arraystretch{1.1}
	\begin{center}
		\begin{tabular}{cccccccccc}
			\hline
			Model                     & Attack                     & Fuzziness-tuned  & VGG16           & ResNet50           & ResNext50         & WRN-16-4       & DenseNet121       & MobileNet-v2         & \textbf{Average} \\ \hline
			\multirow{6}{*}{VGG16}    & \multirow{2}{*}{SINI-FGSM} & \ding{56}    & 96.05*          & 58.65           & 67.1           & 69.45          & 65.2           & 65.55          & 65.19            \\
			&                            & \ding{52}   & \textbf{99.5*}  & \textbf{68.8}   & \textbf{74.45} & \textbf{78.0}  & \textbf{74.3}  & \textbf{74.4}  & \textbf{73.99}   \\ \cline{2-10} 
			& \multirow{2}{*}{VMI-FGSM}  & \ding{56}    & \textbf{99.3*}  & 74.7            & 76.55          & 80.45          & 76.1           & 75.35          & 76.63            \\
			&                            & \ding{52}   & 99.2*           & \textbf{78.0}   & \textbf{80.75} & \textbf{83.7}  & \textbf{80.1}  & \textbf{80.3}  & \textbf{80.57}   \\ \cline{2-10} 
			& \multirow{2}{*}{FIA}       & \ding{56} & 94.45*          & \textbf{60.35}  & \textbf{69.25} & 71.5           & 64.1           & \textbf{69.3}  & 66.9             \\
			&                            & \ding{52}   & \textbf{94.55*} & 60.05           & 68.15          & \textbf{72.6}  & \textbf{64.9}  & 69.25          & \textbf{66.99}   \\ \hline
			\multirow{8}{*}{ResNet50} & \multirow{2}{*}{SINI-FGSM} & \ding{56}    & 68.25           & 95.6*           & 78.3           & 81.45          & 79.8           & 67.95          & 75.15            \\
			&                            & \ding{52}   & \textbf{82.45}  & \textbf{99.3*}  & \textbf{89.55} & \textbf{91.45} & \textbf{89.85} & \textbf{81.95} & \textbf{87.05}   \\ \cline{2-10} 
			& \multirow{2}{*}{VMI-FGSM}  & \ding{56}    & 80.9            & \textbf{97.55*} & 85.65          & 88.1           & 86.6           & 78.55          & 83.96            \\
			&                            & \ding{52}   & \textbf{83.3}   & 97.1*           & \textbf{88.45} & \textbf{90.5}  & \textbf{89.1}  & \textbf{81.25} & \textbf{86.52}   \\ \cline{2-10} 
			& \multirow{2}{*}{FIA}       & \ding{56} & \textbf{76.65}  & \textbf{97.9*}  & \textbf{83.9}  & \textbf{86.65} & \textbf{84.35} & 77.6           & 81.83            \\
			&                            & \ding{52}   & 76.5            & 97.6*           & \textbf{83.9}  & 86.45          & 84.25          & \textbf{78.2}  & \textbf{81.86}   \\ \cline{2-10} 
			& \multirow{2}{*}{SGM}       & \ding{56}    & 72.15           & \textbf{96.15*} & 79.15          & 82.35          & 77.15          & 75.1           & 77.18            \\
			&                            & \ding{52}   & \textbf{72.55}  & 95.55*          & \textbf{80.4}  & \textbf{83.1}  & \textbf{78.95} & \textbf{78.65} & \textbf{78.73}   \\ \hline
		\end{tabular}
	\end{center}
\end{table*}
\begin{table*}[t]
	\caption{The untarget attack success rates (\%) on six models without defenses for ImageNet using various transfer attacks with the attack strength $\epsilon=8/255$. The adversarial examples are generated by VGG16, ResNet50 and ResNet152, respectively. $*$ denotes the attack success rates under white-box attacks. "\ding{52}" and "\ding{56}" denote whether to use fuzziness-tuned method. \textbf{Average} means to calculate the average value except $*$.}
	\label{tab:imagenet_no_defenses}
	\renewcommand\arraystretch{1.1}
	\begin{center}
		\begin{tabular}{cccccccccc}
			\hline
			Model                      & Attack                     & Fuzziness-tuned  & VGG16          & VGG19         & ResNet50          & ResNet152         & Inception-v3        & Mobilenet-v2        & \textbf{Average} \\ \hline
			\multirow{6}{*}{VGG16}     & \multirow{2}{*}{SINI-FGSM} & \ding{56}    & \textbf{100*}  & 97.6          & 51.4           & 35.6           & \textbf{41.9} & 67.9          & 58.88            \\
			&                            & \ding{52}   & \textbf{100*}  & \textbf{97.8} & \textbf{54.8}  & \textbf{38.5}  & 41.2          & \textbf{70.3} & \textbf{60.52}   \\ \cline{2-10} 
			& \multirow{2}{*}{VMI-FGSM}  & \ding{56}    & 99.4*          & 97.1          & 56.9           & 42.2           & 40.7          & 69.3          & 61.24            \\
			&                            & \ding{52}   & \textbf{100*}  & \textbf{97.4} & \textbf{61.0}  & \textbf{45.7}  & \textbf{44.0} & \textbf{72.1} & \textbf{64.04}   \\ \cline{2-10} 
			& \multirow{2}{*}{FIA}       & \ding{56} & \textbf{99.8*} & 96.5          & 63.7           & 46.1           & 44.7          & 76.3          & 65.46            \\
			&                            & \ding{52}   & \textbf{99.8*} & \textbf{97.1} & \textbf{69.0}  & \textbf{49.5}  & \textbf{47.5} & \textbf{80.9} & \textbf{68.8}    \\ \hline
			\multirow{8}{*}{ResNet50}  & \multirow{2}{*}{SINI-FGSM} & \ding{56}    & 62.0           & 61.1          & 100*           & 76.6           & 47.5          & 68.4          & 63.12            \\
			&                            & \ding{52}   & \textbf{68.2}  & \textbf{66.5} & \textbf{100*}  & \textbf{80.1}  & \textbf{50.0} & \textbf{70.8} & \textbf{67.12}   \\ \cline{2-10} 
			& \multirow{2}{*}{VMI-FGSM}  & \ding{56}    & 65.1           & 63.6          & 99.9*          & 79.7           & 50.2          & 67.3          & 65.18            \\
			&                            & \ding{52}   & \textbf{70.9}  & \textbf{68.3} & \textbf{100*}  & \textbf{80.7}  & \textbf{52.5} & \textbf{71.9} & \textbf{68.86}   \\ \cline{2-10} 
			& \multirow{2}{*}{FIA}       & \ding{56} & 59.4           & 55.9          & \textbf{99.9*} & 67.5           & 42.2          & 60.4          & 57.08            \\
			&                            & \ding{52}   & \textbf{68.9}  & \textbf{65.1} & 99.8*          & \textbf{79.3}  & \textbf{49.7} & \textbf{69.9} & \textbf{66.58}   \\ \cline{2-10} 
			& \multirow{2}{*}{SGM}       & \ding{56}    & 74.4           & 68.0          & 99.7*          & \textbf{74.6}  & 49.0          & 72.7          & 67.74            \\
			&                            & \ding{52}   & \textbf{78.5}  & \textbf{74.2} & \textbf{100*}  & 73.8           & \textbf{50.6} & \textbf{77.3} & \textbf{70.88}   \\ \hline
			\multirow{8}{*}{ResNet152} & \multirow{2}{*}{SINI-FGSM} & \ding{56}    & 55.1           & 54.9          & \textbf{82.7}  & 99.9*          & 46.9          & 62.0          & 60.32            \\
			&                            & \ding{52}   & \textbf{60.3}  & \textbf{60.1} & 82.6           & \textbf{100*}  & \textbf{51.2} & \textbf{68.1} & \textbf{64.46}   \\ \cline{2-10} 
			& \multirow{2}{*}{VMI-FGSM}  & \ding{56}    & 58.1           & 58.0          & 83.5           & 99.7*          & 50.5          & 62.5          & 62.52            \\
			&                            & \ding{52}   & \textbf{62.0}  & \textbf{58.8} & \textbf{84.5}  & \textbf{100*}  & \textbf{52.2} & \textbf{66.5} & \textbf{64.8}    \\ \cline{2-10} 
			& \multirow{2}{*}{FIA}       & \ding{56} & 48.2           & 46.5          & 69.6           & \textbf{99.8*} & 40.2          & 54.3          & 51.76            \\
			&                            & \ding{52}   & \textbf{56.8}  & \textbf{54.9} & \textbf{80.3}  & 99.4*          & \textbf{47.9} & \textbf{62.5} & \textbf{60.48}   \\ \cline{2-10} 
			& \multirow{2}{*}{SGM}       & \ding{56}    & 67.8           & 63.4          & 83.2           & 99.5*          & 48.7          & 72.6          & 67.14            \\
			&                            & \ding{52}   & \textbf{74.7}  & \textbf{68.8} & \textbf{84.2}  & \textbf{99.9*} & \textbf{51.2} & \textbf{76.5} & \textbf{71.08}   \\ \hline
		\end{tabular}
	\end{center}
\end{table*}
\begin{table*}[t]
	\caption{The untarget attack success rates (\%) on eight advanced defense methods for ImageNet using various transfer attacks with the attack strength $\epsilon=8/255$. The adversarial examples are generated by VGG16, ResNet50 and ResNet152, respectively. $*$ denotes the attack success rates under white-box attacks. "\ding{52}" and "\ding{56}" denote whether to use fuzziness-tuned method. \textbf{Average} means to calculate the average value except $*$.}
	\label{tab:imagenet_with_defenses}
	\renewcommand\arraystretch{1.1}
	\begin{center}
		\begin{tabular}{cccccccccccc}
			\hline
			Model                      & Attack                     & Fuzziness-tuned  & RP            & Bit-Red       & JPEG          & FD            & NRP           & RS            & Inc-v3$_{adv}$    & IncRes-v2$^{ens}_{adv}$ & \textbf{Average} \\ \hline
			\multirow{6}{*}{VGG16}     & \multirow{2}{*}{SINI-FGSM} & \ding{56}    & 61.6          & 58.7          & 54.2          & 57.0          & 27.7          & \textbf{45.0} & 23.2          & \textbf{12.9}     & 42.54            \\
			&                            & \ding{52}   & \textbf{63.9} & \textbf{62.3} & \textbf{55.8} & \textbf{59.0} & \textbf{29.3} & 43.5          & \textbf{23.4} & 12.1              & \textbf{43.66}   \\ \cline{2-12} 
			& \multirow{2}{*}{VMI-FGSM}  & \ding{56}    & 64.0          & 62.4          & 55.9          & 60.9          & 30.7          & 46.5          & 22.3          & 11.7              & 44.3             \\
			&                            & \ding{52}   & \textbf{66.4} & \textbf{64.4} & \textbf{60.8} & \textbf{63.4} & \textbf{31.8} & \textbf{49.0} & \textbf{23.5} & \textbf{13.0}     & \textbf{46.54}   \\ \cline{2-12} 
			& \multirow{2}{*}{FIA}       & \ding{56} & 70.0          & 69.6          & 56.4          & 61.9          & \textbf{39.5} & 56.0          & 24.1          & 11.2              & 48.59            \\
			&                            & \ding{52}   & \textbf{73.2} & \textbf{74.8} & \textbf{62.9} & \textbf{67.9} & 32.1          & \textbf{58.5} & \textbf{24.8} & \textbf{11.5}     & \textbf{50.71}   \\ \hline
			\multirow{8}{*}{ResNet50}  & \multirow{2}{*}{SINI-FGSM} & \ding{56}    & 62.7          & 61.0          & 55.3          & 57.0          & \textbf{31.9} & 47.5          & \textbf{27.5} & 14.6              & 44.69            \\
			&                            & \ding{52}   & \textbf{62.9} & \textbf{61.6} & \textbf{58.6} & \textbf{61.2} & 31.5          & \textbf{49.0} & 27.1          & \textbf{15.1}     & \textbf{45.88}   \\ \cline{2-12} 
			& \multirow{2}{*}{VMI-FGSM}  & \ding{56}    & 61.4          & 59.3          & 57.5          & 60.2          & 33.6          & 46.5          & 24.8          & 15.5              & 44.85            \\
			&                            & \ding{52}   & \textbf{64.9} & \textbf{62.1} & \textbf{61.4} & \textbf{62.0} & \textbf{36.1} & \textbf{52.0}          & \textbf{28.2}          & \textbf{16.6}     & \textbf{47.91}   \\ \cline{2-12} 
			& \multirow{2}{*}{FIA}       & \ding{56} & 54.8          & 52.4          & 48.2          & 50.3          & \textbf{39.0} & 45.0          & 24.3          & 11.9              & 40.74            \\
			&                            & \ding{52}   & \textbf{64.2} & \textbf{62.2} & \textbf{57.5} & \textbf{60.8} & 37.2          & \textbf{53.0} & \textbf{26.1} & \textbf{13.8}     & \textbf{46.85}   \\ \cline{2-12} 
			& \multirow{2}{*}{SGM}       & \ding{56}    & 70.3          & 64.7          & 58.7          & 64.4          & 38.0          & 48.5          & 25.2          & \textbf{12.7}     & 47.81            \\
			&                            & \ding{52}   & \textbf{73.1} & \textbf{67.6} & \textbf{60.3} & \textbf{65.4} & \textbf{38.7} & \textbf{50.5} & \textbf{25.6} & 12.1              & \textbf{49.16}   \\ \hline
			\multirow{8}{*}{ResNet152} & \multirow{2}{*}{SINI-FGSM} & \ding{56}    & 59.2          & 55.8          & 53.3          & 54.7          & 27.4          & 47.0          & \textbf{27.9} & \textbf{17.0}     & 42.79            \\
			&                            & \ding{52}   & \textbf{62.6} & \textbf{58.4} & \textbf{55.3} & \textbf{57.2} & \textbf{30.3} & \textbf{48.0} & 27.6          & 16.4              & \textbf{44.48}   \\ \cline{2-12} 
			& \multirow{2}{*}{VMI-FGSM}  & \ding{56}    & 59.3          & 55.4          & 54.1          & 55.7          & 28.8          & 42.5          & 24.9          & 15.8              & 42.06            \\
			&                            & \ding{52}   & \textbf{61.2} & \textbf{58.0} & \textbf{56.8} & \textbf{57.1} & \textbf{31.2} & \textbf{46.0} & \textbf{27.4} & \textbf{19.3}     & \textbf{44.63}   \\ \cline{2-12} 
			& \multirow{2}{*}{FIA}       & \ding{56} & 50.3          & 46.3          & 44.9          & 47.9          & \textbf{36.7} & 41.5          & 23.9          & 12.5              & 38.0             \\
			&                            & \ding{52}   & \textbf{58.2} & \textbf{54.8} & \textbf{53.7} & \textbf{57.1} & 33.8          & \textbf{47.0} & \textbf{25.8} & \textbf{15.9}     & \textbf{43.29}   \\ \cline{2-12} 
			& \multirow{2}{*}{SGM}       & \ding{56}    & 66.0          & 62.4          & 56.5          & 61.3          & 34.1          & \textbf{47.0} & 25.4          & \textbf{13.8}     & 45.81            \\
			&                            & \ding{52}   & \textbf{71.9} & \textbf{65.9} & \textbf{59.3} & \textbf{64.6} & \textbf{34.5} & \textbf{47.0} & \textbf{26.3} & 13.6              & \textbf{47.89}   \\ \hline
		\end{tabular}
	\end{center}
\end{table*}
\begin{figure*}[ht]
	\begin{center}
		\centerline{\includegraphics[width=\textwidth]{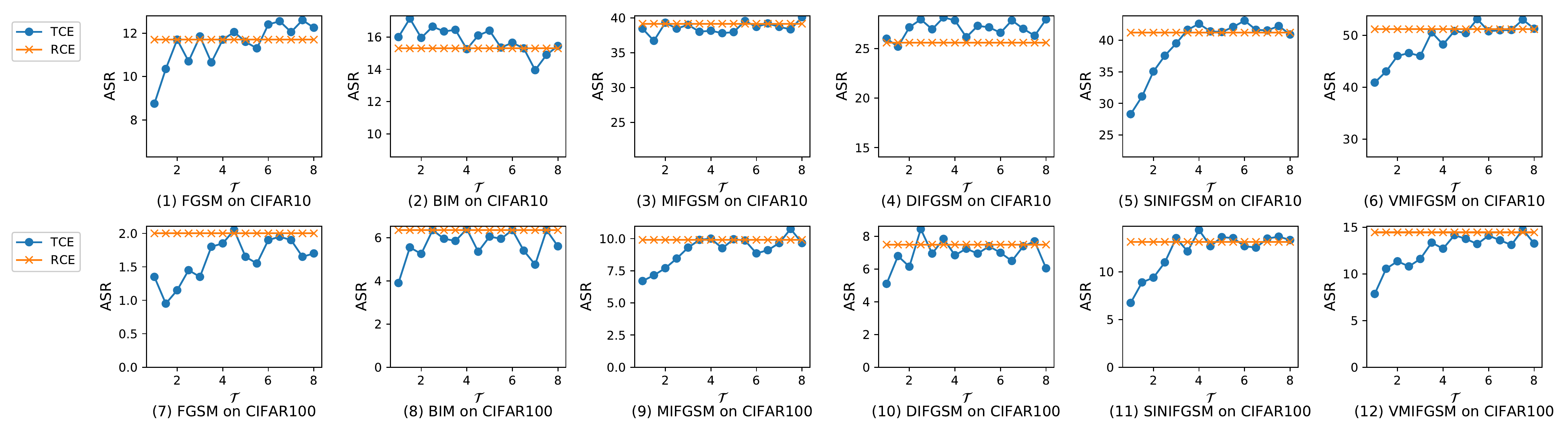}}
		\caption{The target attack success rates (\%) from ResNet50 as the surrogate model to VGG16 as the victim model on CIFAR10/100.}\vspace{-5mm}
		\label{FIG:Relationship_between_TCE_and_RCE}
	\end{center}
\end{figure*}
\begin{table*}[t]
	\caption{The average untarget attack success rates (\%) on five models without defenses for CIFAR10/100 and ImageNet using various transfer attacks with the attack strength $\epsilon=8/255$. The adversarial examples are generated by VGG16, ResNet50 and ResNet152, respectively.}
	\label{tab:comparisons_between_fce_and_rce}
	\renewcommand\arraystretch{1.1}
	\begin{center}
		\begin{tabular}{ccccccccc}
			\hline
			Dataset                   & Model                      & Method & FGSM           & I-FGSM         & MI-FGSM        & DI-FGSM        & SINI-FGSM      & VMI-FGSM       \\ \hline
			\multirow{4}{*}{CIFAR10}  & \multirow{2}{*}{VGG16}     & RCE  & 37.54          & 66.47          & 69.81          & 64.05          & 84.02          & 75.78          \\
			&                            & Fuzziness-tuned  & \textbf{58.23} & \textbf{76.94} & \textbf{86.85} & \textbf{81.76} & \textbf{84.49} & \textbf{88.42} \\ \cline{2-9} 
			& \multirow{2}{*}{ResNet50}  & RCE  & 41.39          & 73.64          & 78.64          & 71.84          & 89.5           & 83.78          \\
			&                            & Fuzziness-tuned  & \textbf{55.01} & \textbf{80.5}  & \textbf{88.8}  & \textbf{82.92} & \textbf{90.11} & \textbf{89.23} \\ \hline
			\multirow{4}{*}{CIFAR100} & \multirow{2}{*}{VGG16}     & RCE  & 59.94          & 50.14          & 68.8           & 57.84          & 72.7           & 77.4           \\
			&                            & Fuzziness-tuned  & \textbf{62.18} & \textbf{53.86} & \textbf{72.37} & \textbf{62.05} & \textbf{73.99} & \textbf{80.57} \\ \cline{2-9} 
			& \multirow{2}{*}{ResNet50}  & RCE  & 62.21          & 74.66          & 80.65          & 75.8           & 86.55          & 84.86          \\
			&                            & Fuzziness-tuned  & \textbf{64.97} & \textbf{75.08} & \textbf{81.9}  & \textbf{76.91} & \textbf{87.05} & \textbf{86.52} \\ \hline
			\multirow{6}{*}{ImageNet} & \multirow{2}{*}{VGG16}     & RCE  & 30.06          & 26.36          & 45.24          & 37.46          & 57.58          & 57.14          \\
			&                            & Fuzziness-tuned  & \textbf{35.08} & \textbf{33.4}  & \textbf{50.02} & \textbf{44.04} & \textbf{60.52} & \textbf{64.04} \\ \cline{2-9} 
			& \multirow{2}{*}{ResNet50}  & RCE  & 26.26          & 20.98          & 48.28          & 41.92          & 65.02          & 63.78          \\
			&                            & Fuzziness-tuned  & \textbf{32.08} & \textbf{25.34} & \textbf{50.22} & \textbf{46.7}  & \textbf{67.12} & \textbf{68.86} \\ \cline{2-9} 
			& \multirow{2}{*}{ResNet152} & RCE  & 26.38          & 22.38          & 46.12          & 38.6           & 63.66          & 60.5           \\
			&                            & Fuzziness-tuned  & \textbf{30.78} & \textbf{25.18} & \textbf{48.14} & \textbf{44.68} & \textbf{64.46} & \textbf{64.8}  \\ \hline
		\end{tabular}
	\end{center}
\end{table*}

\subsection{Effect of Fuzziness-tuned Method on Transferability of Generated Adversarial Examples}
\label{sec:effects_of_FCE_on_transferability}
In this section, the effectiveness of our fuzziness-tuned method on attack success rates (ASRs) is evaluated. Particularly, in the attacks integrated with our fuzziness-tuned method, the $L_{FCE}$ defined in Equation~(\ref{equation:fece}) is used as loss function to generate adversarial examples. Particularly, for FGSM, I/DI/MI/SINI/VMI-FGSM and SGM, the default loss function is the cross-entropy (CE) loss. 
Additionally, the logit output with respect to the ground truth label is used to calculate the aggregate gradient in FIA (as shown in Equations~(\ref{equation:fia_weight}) and (\ref{equation:fia_avg_weight})), which will lose the logit output information of other wrong labels. To utilize the lost information,  our $L_{FCE}$ loss is explored to calculate more effective aggregate gradient. For our fuzziness-tuned method, the parameters $\mathcal{T}$ and $\mathcal{K}$ are selected as the best combination among different attacks and datasets, based on the average ASR heat maps shown in Fig.~\ref{FIG:Ablation_of_K_T_on_CIFAR10_with_VGG16}, Fig.~\ref{FIG:Ablation_of_K_T_on_CIFAR100_with_VGG16} and Fig.~\ref{FIG:Ablation_of_K_T_on_ImageNet_with_ResNet50}.

\subsubsection{Attack against the Victim Models without Defenses}
\label{sec:attacking_models_without_defenses} 

Tables~\ref{tab:cifar10_no_defenses}, \ref{tab:cifar100_no_defenses} and \ref{tab:imagenet_no_defenses} show the average ASRs on the victim models in SINI/VMI-FGSM and FIA w/ or w/o our fuzziness-tuned method on CIFAR10/100 and ImageNet datasets, respectively. 

As shown in Table~\ref{tab:cifar10_no_defenses}, through integrating with our fuzziness-tuned mehod, the average ASRs of these attacks are increased by 2.85\% to 12.69\% for SINI/VMI-FGSM and 0.48\% to 1.34\% for FIA on the victim models. When compared with $L_{logit}$ loss, the aggregate gradient computed by our $L_{FCE}$ loss can lead to higher transferability of FIA. That means, our fuzziness-tuned method can improve the ASR on the surrogate model and enhance the transferability of generated adversarial examples on the victim models. 


As shown in Table~\ref{tab:cifar100_no_defenses}, with our fuzziness-tuned method, the highest average ASR of all baselines can be increased from 83.96\% to 87.05\%. In particular, the average ASR is increased by 2.56\% to 11.9\% for SINI/VMI-FGSM and 1.55\% for SGM under black-box setting. Additionally, the ASR of our fuzziness-tuned FIA setting a large parameter $\mathcal{T}$ approach to that of the logit output-based FIA.



As shown in Table~\ref{tab:imagenet_no_defenses}, the average ASR is increased by 1.64\% to 4.14\% for SINI/VMI-FGSM and 3.14\% to 3.94\% for SGM under black-box setting. Additionally, in comparison with FIA that uses the logit output to compute the aggregate gradient, our fuzziness-tuned method can increase the average ASR by 3.34\% to 8.72\% because of considering more information of the logit outputs corresponding to other wrong labels in the aggregate gradient computation. The results show that, with our fuzziness-tuned method, the highest ASRs in all baselines can be improved from 67.74\% to 71.08\%, representing the transferability of generated adversarial examples can be enhanced.

Tables~\ref{tab:cifar10_no_defenses}, \ref{tab:cifar100_no_defenses} and \ref{tab:imagenet_no_defenses} show that our fuzziness-tuned method can improve the ASRs on the victim models on basis of keeping or increasing the ASR on the surrogate model for SINI/VMI-FGSM and SGM. 

\subsubsection{Attack against the Victim Models with Defenses}
\label{sec:attacking_models_with_defenses}
The existing efforts have shown that the current advanced defense methods are vulnerable to SINI/VMI-FGSM~\cite{SI-NI-FGSM, VMI-FGSM}. Hence, in this section, Table~\ref{tab:imagenet_with_defenses} shows that the effectiveness of our fuzziness-tuned method is only evaluated on the latest attacks, e.g.,  SINI/VMI-FGSM, FIA and SGM, to against the victim models with the advanced defenses on ImageNet. Particularly, eight advanced defense methods, including six input modifications and two adversarial trained models, are considered in the evaluations. 
As shown in Table~\ref{tab:imagenet_with_defenses}, our fuzziness-tuned method can improve the average ASR by 1.12\% to 3.06\% for SINI/VMI-FGSM, 2.12\% to 6.11\% for FIA and 1.35\% to 2.08\% for SGM, to against the victim models with eight advanced defense methods, respectively. Specifically, the ASRs of VMI-FGSM and FIA with our fuzziness-tuned method can be increased on the victim models with all defense methods. The average ASRs of SINI-FGSM and SGM with our fuzziness-tuned method can be increased as well on the victim models with six input modification methods and maintained on the victim models that are adversarially trained. Hence, the results can demonstrate that our fuzziness-tuned method can improve the ASRs on the victim models with defense methods. i.e., enhancing the transferability of the latest attacks against the victim models that are integrated with the advanced defense methods.


\subsection{Comparison between $L_{RCE}$ and our $L_{FCE}$}
\label{sec:comparison_between_our_FCE_and_RCE}
The RCE loss~\cite{RCE} was proposed to significantly improve the transferability of the current transfer-based attacks under the target attack setting, which gave a geometric interpretation of the logit gradient that the RCE loss guides the logit to be updated in the direction of implicitly maximizing its rank distance from the ground truth label. 

Recall that, Proposition~\ref{proposition:relationship_between_TCE_and_RCE} theoretically proves that the gradient sign of our $L_{FCE}$ loss with respect to the input $x$ is equivalent to that of the  $L_{RCE}$ loss when the parameter $\mathcal{T}$ goes to $+\infty$. Proposition~\ref{proposition:TCE_property2} demonstrates that our  $L_{FCE}$ loss can stabilize the update direction of generating adversarial example when the parameter $\mathcal{T}$ goes to $+\infty$. Therefore, in addition to the intuitive geometric interpretation, the great performance of the RCE loss~\cite{RCE} can also be explained as stabilizing the update direction to indirectly reduce the fuzziness.

To verify Propositions~\ref{proposition:TCE_property2} and \ref{proposition:relationship_between_TCE_and_RCE} empirically, the comparison experiments between the  $L_{RCE}$ and our  $L_{FCE}$ are conducted on CIFAR10/100 for FGSM and I/MI/DI/SINI/VMI-FGSM under target attack setting and untarget attack setting, respectively, as shown in Fig.~\ref{FIG:Relationship_between_TCE_and_RCE} and Table~\ref{tab:comparisons_between_fce_and_rce}. 

Under the target attack setting, ResNet50 and VGG16 are considered as the surrogate model and victim model respectively and the target label is selected randomly. As shown in Fig.~\ref{FIG:Relationship_between_TCE_and_RCE}, with increase of the parameter $\mathcal{T}$, the target ASR of attacks with our $L_{FCE}$ loss is increased gradually and approaches to that with $L_{RCE}$ loss, which experimentally verifies Proposition~\ref{proposition:TCE_property2} and Proposition~\ref{proposition:relationship_between_TCE_and_RCE}, respectively.



Under the untarget attack setting, Table~\ref{tab:comparisons_between_fce_and_rce} shows that the ASRs of attacks with our  fuzziness-tuned method (i.e., $L_{FCE}$ loss) are significantly better than that with the RCE loss (i.e., $L_{RCE}$) on various datasets and surrogate models. Due to the RCE loss is a fixed function and does not included any hyper-parameter to adjust the stability of update direction,  the adversarial example generated by the RCE loss (i.e., $L_{RCE}$) might get stuck in the overfitting fuzzy domain, resulting in the poor transferability. The $L_{FCE}$ loss in our fuzziness-tuned method can tune the fuzziness of generated adversarial examples to ensure the generated adversarial examples skip out of both overfitting and underfitting fuzzy domain, thereby improving the ASRs of generated adversarial examples on the victim models, i.e., enhancing the transferability of generated adversarial examples. 

\section{Related Works}
\label{sec:related_work}
\subsection{Attacks}
Adversarial attacks are categorized into white-box attacks and black-box attacks. The former has known the architecture and parameters of the victim model, such as the projected gradient descent (PGD)~\cite{Madry-AT}, Carlini \& Wagner~\cite{CW} and adaptive PGD~\cite{APGD}, etc. The latter does not have any information about the victim model except for its output, which can be divided into query-based attacks and transfer-based attacks. The query-based attacks can catch the output of the victim model, including have zeroth order optimization~\cite{ZOO} and Square~\cite{Square}, etc. The transfer-based attacks uses the surrogate model to estimate the perturbation. In this paper, we concentrate on investigating the transferability of transfer-based attacks, i.e., the transferability of adversarial examples generated by transfer-based attacks from the surrogate model to the victim model.

Generally, the transfer-based attacks can be divided as the family of gradient-based attacks, input transformations and other types of attacks. In particular, the family of gradient-based attacks, including \textbf{FGSM}~\cite{FGSM},  \textbf{I-FGSM}~\cite{I-FGSM}, \textbf{MI-FGSM}~\cite{MI-FGSM}, \textbf{NI-FGSM}~\cite{SI-NI-FGSM}, \textbf{VMI-FGSM}~\cite{VMI-FGSM}, have been mentioned in Section~\ref{Family_of_FGSM}.

Input transformations, including diverse input method~\cite{DI-FGSM}, translation-invariant method~\cite{TI-FGSM}, scale-invariant method~\cite{SI-NI-FGSM}, utilize the invariant properties of deep neural networks to expand input diversity, thereby avoiding the generated adversarial examples falling into the poor local optimum. Specifically, diverse input method~\cite{DI-FGSM} can randomly resize the input images and randomly pad zeros around the input images. Translation-invariant method~\cite{TI-FGSM} optimized a perturbation over an ensemble of translated images through convolving the gradient at the untranslated image with a pre-defined kernel to improve the efficiency of attacks. Scale-Invariant method~\cite{SI-NI-FGSM} can optimize the adversarial perturbations over the scale copies of the input images through involving the discovered scale-invariant property of deep learning models.

The feature importance-aware attack (\textbf{FIA})~\cite{FIA} can be considered as the transfer-based attack as well, which can corrupt the middle layer features by adding weights to these features for avoiding overfitting. The neuron attribution-based attack has been proposed as well to solve the inaccurate neuron importance estimations of the existing feature-level attacks, such as \textbf{FIA}~\cite{FIA}.  In addition, the skip gradient method (\textbf{SGM})~\cite{SGM} and the relative cross-entropy (RCE loss)~\cite{RCE} have been also proposed to ensure the generated adversarial examples skip out of poor local maxima and improve the transferability of the generated adversarial examples, which are mentioned in Section~\ref{Family_of_FGSM} as well.

However, when a low attack stength (e.g.,  $\epsilon=8/255$) is applied in these transfer-based attacks, the attack success rate of generated adversarial examples on the surrogate model will be much larger than that on the victim model, which means the generated adversarial examples achieve poor transferability from the surrogate model to the victim model. Hence, in our paper a fuzziness-tuned method is proposed to be applied into these transfer-based attacks with low  attack stength to improve the attack success rate of generated adversarial examples on the victim model, i.e., enhancing the transferability of adversarial examples generated by these  transfer-based attacks with low attack stength.

\subsection{Defenses}
\label{sec:defenses}
Adversarial defense methods can be mainly divided as adversarial training~\cite{Ensemble-Adversarial-Training,Madry-AT,Free-AT,Fast-AT,Trades} and input modification~\cite{RP,Bit-Red,JPEG,FD,RS,NRP}. The former can modify the parameters of deep neural networks via training with adversarial examples. The latter can purify the input of deep learning models and eliminates the effect of the added perturbation.

Adversarial training can achieve great robustness against the white-box attacks. For example,  Madry {\em et al.}~\cite{Madry-AT} firstly proposed vanilla adversarial training (AT) with adversarial examples generated by project gradient descent (PGD). To speed up the vanilla AT, free AT~\cite{Free-AT} was proposed to eliminate the overhead cost of generating adversarial examples by recycling the gradient information computed. Due to the large cost of PGD to generate adversarial examples, Wong {\em et al.}~\cite{Fast-AT} explored a fast adversarial training method, which uses a faster attack, i.e. FGSM, to generate adversarial examples. However, these adversarial training methods degrade the performance of the trained model on the natural examples. To this end, TRADES~\cite{Trades} was proposed to balance the performance of deep learning models on the natural and adversarial examples. Additionally, Tram{\`{e}}r {\em et al.}~\cite{Ensemble-Adversarial-Training} proposed the ensemble adversarial training method, which can effectively and specifically defense adversarial examples generated by transfer-based attacks.

Input modifications can be divided as input transformation methods~\cite{RP,Bit-Red,JPEG} and input purification methods~\cite{FD,RS,NRP}. The input transformation methods can destroy the spatial characteristics of the added perturbation, thereby reducing the negative effect of adversarial examples. The input purification methods aim to restore the adversarial example to the corresponding normal example.

Although these defense methods can improve the robustness of models in DNNs and eliminate the perturbation added in adversarial examples, existing efforts have shown that these defense methods are vulnerable to the latest transfer-based attacks, such as SINI/VMI-FGSM\cite{SI-NI-FGSM,VMI-FGSM}. Hence, in our paper, the fuzziness-tuned method is proposed to be applied into these transfer-based attacks, aiming to further investigate the vulnerability of existing defense methods and generate more "real" adversarial examples to enhance the robustness of the training models on deep neural networks.


\section{Conclusion}
\label{sec:conclusion}
In this paper, we investigated the essential reason for the poor transferability of adversarial examples generated by transfer-based attacks with low attack strength and found that the issue appeared due to generated adversarial examples easily to falls into a special area, defined as the fuzzy domain in our paper. Then, a fuzziness-tuned method was proposed to tune the fuzziness of adversarial examples during their generation and ensure the generated adversarial examples can skip out of the fuzzy domain, thereby enhancing the transferability of the generated adversarial examples. Particularly, the proposed fuzziness-tuned method consists of the confidence scaling mechanism and the temperature scaling mechanism, in which the former can accelerate the decline of the fuzziness directly by increasing the gradient descent weight, while the latter can stabilize the update direction to indirectly guarantee the stability of the fuzziness decline. The extensive experiment results on CIFAR10/100 and ImageNet demonstrated our fuzziness-tuned method can effectively improve the attack success rates of transfer-based attacks on victim models and enhance the transferability of generated adversarial examples in comparison with baselines and existing methods.


\bibliographystyle{IEEEtran}
\bibliography{ref}

\end{document}